\documentclass[conference]{IEEEtran}
\IEEEoverridecommandlockouts

\usepackage{amsmath,amssymb,amsfonts}
\usepackage{graphicx}
\usepackage{textcomp}
\usepackage{xcolor}
\usepackage{amsthm}

\usepackage[utf8]{inputenc} 
\usepackage[T1]{fontenc}    
\usepackage{url}            
\usepackage{booktabs}       
\usepackage{amsfonts}       
\usepackage{nicefrac}       
\usepackage{microtype}      
\usepackage{xcolor}         
\usepackage{hyperref}       
\usepackage{orcidlink}      
\usepackage{mathtools}      
\usepackage{graphicx}       
\usepackage{subcaption}     
\captionsetup[subfigure]{font=small} 
\usepackage{wrapfig}        
\usepackage[numbers,sort&compress]{natbib}  
\usepackage{algorithm}      
\usepackage{algpseudocode}  
\usepackage{tcolorbox}      
\newtheorem{theorem}{Theorem}
\newtheorem{lemma}[theorem]{Lemma}
\newtheorem{corollary}[theorem]{Corollary}

\newtheorem{definition}[theorem]{Definition}
\usepackage{thmtools}
\usepackage{thm-restate}
\usepackage{colortbl}
\usepackage{enumitem}

\def\BibTeX{{\rm B\kern-.05em{\sc i\kern-.025em b}\kern-.08em
    T\kern-.1667em\lower.7ex\hbox{E}\kern-.125emX}}
\begin{document}

\title{Towards Strong Certified Defense with Universal Asymmetric Randomization
}

\author{
    \IEEEauthorblockN{Hanbin Hong}
    \IEEEauthorblockA{
        \textit{University of Connecticut}
    }
    \and
    \IEEEauthorblockN{Ashish Kundu}
    \IEEEauthorblockA{
        \textit{Cisco Research}
    }
    \and
    \IEEEauthorblockN{Ali Payani}
    \IEEEauthorblockA{
        \textit{Cisco Research}
    }
    \and
    \IEEEauthorblockN{Binghui Wang}
    \IEEEauthorblockA{
        \textit{Illinois Institute of Technology}
    }
    \and
    \IEEEauthorblockN{Yuan Hong}
    \IEEEauthorblockA{
        \textit{University of Connecticut}
    }
}

\maketitle

\begin{abstract}

Randomized smoothing has become essential for achieving certified adversarial robustness in machine learning models. However, current methods primarily use isotropic noise distributions that are uniform across all data dimensions, such as image pixels, limiting the effectiveness of robustness certification by ignoring the heterogeneity of inputs and data dimensions. To address this limitation, we propose UCAN: a novel technique that \underline{U}niversally \underline{C}ertifies adversarial robustness with \underline{A}nisotropic \underline{N}oise. UCAN is designed to enhance any existing randomized smoothing method, transforming it from symmetric (isotropic) to asymmetric (anisotropic) noise distributions, thereby offering a more tailored defense against adversarial attacks. Our theoretical framework is versatile, supporting a wide array of noise distributions for certified robustness in different $\ell_p$-norms and applicable to any arbitrary classifier by guaranteeing the classifier's prediction over perturbed inputs with provable robustness bounds through tailored noise injection. Additionally, we develop a novel framework equipped with three exemplary noise parameter generators (NPGs) to optimally fine-tune the anisotropic noise parameters for different data dimensions, allowing for pursuing different levels of robustness enhancements in practice. 
Empirical evaluations underscore the significant leap in UCAN's performance over existing state-of-the-art methods, demonstrating up to $182.6\%$ improvement in certified accuracy at large certified radii on MNIST, CIFAR10, and ImageNet datasets.\footnote{Code is anonymously available at \href{https://github.com/youbin2014/UCAN/}{https://github.com/youbin2014/UCAN/}}
\end{abstract}

\begin{IEEEkeywords}
Certified Robustness, Adversarial Robustness\end{IEEEkeywords}
\section{Introduction}
\label{sec: Introduction}

Deep learning models have demonstrated remarkable performance across a wide range of applications. However, they are notoriously vulnerable to adversarial perturbations—carefully crafted minor modifications to inputs can lead to severe misclassification or misrecognition \citep{goodfellow2014explaining, carlini2017towards}. These adversarial examples pose significant threats in real-world scenarios, such as autonomous driving \cite{sun2020towards}, medical diagnosis \cite{ma2021understanding}, and face recognition systems \cite{dong2019efficient}, where even small prediction errors can result in catastrophic consequences.

To mitigate these vulnerabilities, robust defense mechanisms with certified guarantees are highly desirable. While empirical defense methods, including adversarial training \cite{madry2018towards, shafahi2019adversarial}, perturbation destruction \cite{xu2017feature, xie2018mitigating}, and feature regularization \cite{xie2019feature, yang2021adversarial}, have shown promise, they often fall short against adaptive and stronger adversaries \cite{athalye2018obfuscated, croce2020reliable, xie2019improving}. These methods typically cannot provide formal guarantees of robustness, as their defenses can be broken by more sophisticated attacks.

In contrast, certified robustness methods \cite{wong2018provable, cohen2019certified, lecuyer2019certified} offer provable guarantees by ensuring that no adversarial perturbation within a specified boundary—typically defined by an $\ell_p$-norm ball (e.g., $\ell_1$, $\ell_2$, or $\ell_\infty$)—can alter the model's prediction. Among these, randomized smoothing has emerged as a state-of-the-art (SOTA) technique due to its universal applicability to any arbitrary classifier \cite{lecuyer2019certified, teng2020ell, cohen2019certified}. By injecting noise to data during both the training and inference phases, randomized smoothing transforms any classifier into a smoothed classifier with certified robustness guarantees.

Despite its success, existing randomized smoothing methods predominantly rely on isotropic noise distributions, where the same noise parameters are uniformly applied across all data dimensions (e.g., all pixels in an image). This uniform approach overlooks the inherent heterogeneity of different data dimensions, potentially limiting the effectiveness of robustness certification. For instance, using identical noise for all pixels might not be optimal, as certain pixels may be more critical to the model's decision-making process than others. Consequently, the \emph{accuracy} of the smoothed classifier on both perturbed and clean inputs may be compromised, and the \emph{certified radius} based on the $\ell_p$-norm ball might not fully capture the true robustness potential. In reality, the $\ell_p$-norm ball serves as a \emph{sufficient but not necessary} condition for certification, as some regions outside the $\ell_p$-norm ball can still maintain robustness.

To address these limitations, we introduce UCAN: a novel technique that \underline{U}niversally \underline{C}ertifies adversarial robustness with \underline{A}nisotropic \underline{N}oise. UCAN significantly enhances any existing randomized smoothing method by transitioning from symmetric (isotropic) to asymmetric (anisotropic) noise distributions. This tailored noise injection allows for more effective and adaptive defenses against adversarial attacks by assigning different noise parameters to various data dimensions based on their specific characteristics, importance, and vulnerability.

Developing UCAN involves overcoming two primary and complex challenges:

\begin{enumerate}[label=\arabic*)]
    \item \textbf{Universal Certification Guarantee}: Establishing a robust theoretical foundation that universally supports various anisotropic noise distributions. This framework must ensure strict and sound certified robustness guarantees when different means and variances are assigned to different data dimensions.

\vspace{0.05in}
    
    \item \textbf{Optimal Noise Parameterization}: Designing mechanisms to derive appropriate means and variances for generating anisotropic noise tailored to various data dimensions. This ensures maximal certification performance without compromising robustness.
\end{enumerate}

To tackle these challenges, UCAN offers the following key contributions:

\begin{enumerate}[label=\arabic*)]
    \item \textbf{Universal Certification Theory for Anisotropic Noise}. We present a universal theory for certifying the robustness of randomized smoothing with any anisotropic noise distribution. This theory seamlessly transforms existing certifications based on isotropic noise into those with anisotropic noise, supporting various $\ell_p$-norm perturbations (e.g., $\ell_1$, $\ell_2$, $\ell_\infty$) and ensuring sound certified robustness across different noise distributions.

\vspace{0.05in}
    
    \item \textbf{Customizable Anisotropic Noise Parameter Generators (NPGs)}. We design three distinct NPGs, including two novel neural network-based methods, to efficiently generate the element-wise hyper-parameters (mean and variance) of the anisotropic noise distributions for all data dimensions (e.g., image pixels). These NPGs provide different efficiency-optimality trade-offs, significantly amplifying the certification performance while ensuring certified robustness.
    

\vspace{0.05in}
    
    \item \textbf{Significantly Boosted Certification Performance}. Empirical evaluations on benchmark datasets (MNIST, CIFAR10, and ImageNet) demonstrate that UCAN drastically outperforms SOTA randomized smoothing-based certified robustness methods. Specifically, UCAN achieves up to $182.6\%$ improvement in certified accuracy at large certified radii compared to existing methods.
\end{enumerate}

In summary, UCAN represents a significant advancement in certified robustness by introducing anisotropic noise into randomized smoothing. This approach not only enhances theoretical robustness guarantees but also provides practical mechanisms for optimizing noise parameters, leading to substantial improvements in certification across various datasets.

The remainder of this paper is organized as follows. Section \ref{sec: Preliminary} introduces the preliminaries, including the threat model and the isotropic randomized smoothing method for anisotropic certified robustness. Section \ref{sec: ARS} presents the proposed universal theory and metrics for robustness region. Section \ref{sec: Customize} details the three different methods for customizing anisotropic noise to enhance certification performance. Section \ref{sec:soundness} discusses and proves the soundness of the certification-wise anisotropic noise. Section \ref{sec: Experiments} provides experimental results demonstrating UCAN's superior performance. Finally, Sections \ref{sec: RelatedWork} and \ref{sec: Conclusion} discuss related work and conclude the paper, respectively.

\section{Preliminaries}
\label{sec: Preliminary}
In this section, we provide a brief overview of randomized smoothing with isotropic noise for certified robustness, which forms the foundation for our proposed UCAN framework.

\subsection{Randomized Smoothing and Certified Robustness}

Randomized smoothing is a technique that constructs a smoothed classifier from an arbitrary base classifier by adding random noise to the inputs. The smoothed classifier inherits certain robustness properties, allowing for certified guarantees against adversarial perturbations within a specific norm ball.

Consider a classification task where inputs $x \in \mathbb{R}^d$ are mapped to labels in a finite set of classes $\mathcal{C}$. Given any base classifier $f: \mathbb{R}^d \to \mathcal{C}$, randomized smoothing defines a smoothed classifier $g$ that, for any input $x$, outputs the class most likely to be predicted by $f$ when noise is added to $x$. Specifically, let $\epsilon \in \mathbb{R}^d$ be noise drawn from an arbitrary isotropic probability distribution $\phi$ (e.g., $\mathcal{N}(0,1)$), and define the random variable $X = x + \epsilon$. The smoothed classifier $g$ is then defined as:

\vspace{-2mm}
\begin{equation}\label{eq:smoothed_classifier}
    g(x) = \arg\max_{c \in \mathcal{C}} \mathbb{P}(f(X) = c)
\end{equation}
\vspace{-8mm}

\subsection{Certified Robustness via Isotropic Randomized Smoothing}

Randomized smoothing provides a way to certify the robustness of the smoothed classifier $g$ against adversarial perturbations measured in terms of the $\ell_p$-norm. The core idea is that if the smoothed classifier predicts a class $c_A$ with high probability, and all other classes have significantly lower probabilities, then small perturbations to the input will not change the predicted class. We summarize existing results on certified robustness via randomized smoothing with isotropic noise in the following unified theorem.

\begin{theorem}[\textbf{Certified Robustness via Randomized Smoothing with Isotropic Noise}]
\label{thm:cohen's thm}
Let $f: \mathbb{R}^d \to \mathcal{C}$ be any (possibly randomized) base classifier, and let $\phi$ be an isotropic probability distribution used to generate noise $\epsilon \in \mathbb{R}^d$. Define the smoothed classifier $g$ as in Equation~\eqref{eq:smoothed_classifier}. For a specific input $x \in \mathbb{R}^d$, let $X = x + \epsilon$, and suppose that there exists a class $c_A \in \mathcal{C}$ and bounds $\underline{p_A}, \overline{p_B} \in [0,1]$ such that:

\vspace{-4mm}
\begin{equation}\label{eq:probability_bounds}
    \mathbb{P}(f(X) = c_A) \geq \underline{p_A} \geq \overline{p_B} \geq \max_{c \neq c_A} \mathbb{P}(f(X) = c)
\end{equation}
\vspace{-4mm}

Then, for any perturbation $\delta \in \mathbb{R}^d$ where $\|\delta\|_p < R(\underline{p_A}, \overline{p_B})$, the smoothed classifier $g$ will consistently predict class $c_A$ at $x + \delta$, i.e., $g(x + \delta) = c_A$. Here, $\|\cdot\|_p$ denotes the $\ell_p$-norm (with $p = 1, 2, \infty$), and $R(\underline{p_A}, \overline{p_B})$ is the certified radius, which depends on the noise distribution $\phi$ and the norm $\ell_p$.

\end{theorem}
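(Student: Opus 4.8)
The plan is to prove the theorem through the Neyman--Pearson lemma, which is the standard engine behind certification via randomized smoothing and which abstracts cleanly over the choice of isotropic distribution $\phi$ and the norm $\ell_p$. First I would reduce the multi-class guarantee to a two-class comparison: since $g(x+\delta)=c_A$ holds whenever $\mathbb{P}(f(x+\delta+\epsilon)=c_A) > \max_{c\neq c_A}\mathbb{P}(f(x+\delta+\epsilon)=c)$, it suffices to produce a lower bound on the probability mass assigned to $c_A$ at the shifted center $x+\delta$ and a matching upper bound on the mass of any competing class, and then to show the former strictly exceeds the latter whenever $\|\delta\|_p < R(\underline{p_A},\overline{p_B})$.

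For the lower bound, let $\mu_0$ denote the law of $X=x+\epsilon$ and let $\mu_\delta$ denote the law of $x+\delta+\epsilon$. The decision region $A=\{z : f(z)=c_A\}$ satisfies $\mu_0(A)\geq \underline{p_A}$, and I want to minimize $\mu_\delta(A)$ over all measurable $A$ (equivalently, over all base classifiers consistent with the hypothesis). By Neyman--Pearson, for a fixed $\mu_0$-mass the quantity $\mu_\delta(A)$ is minimized by a sub-level set of the likelihood ratio, i.e.\ a region of the form $\{z : \frac{d\mu_\delta}{d\mu_0}(z)\leq t\}$, with $t$ chosen so that its $\mu_0$-mass equals $\underline{p_A}$. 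Symmetrically, the shifted mass of any runner-up class, subject to $\mu_0$-mass at most $\overline{p_B}$, is maximized by a super-level set $\{z : \frac{d\mu_\delta}{d\mu_0}(z)\geq s\}$. These two extremal regions yield the worst-case shifted probabilities $\underline{P_A}(\delta)$ and $\overline{P_B}(\delta)$.

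The final step is to evaluate these extremal masses explicitly. Because $\phi$ is isotropic, the likelihood ratio $\frac{d\mu_\delta}{d\mu_0}(z)$ is a monotone function of a single scalar statistic determined by $\delta$ (for $\mathcal{N}(0,\sigma^2 I)$ it is monotone in $\langle \delta, z-x\rangle$, so the threshold sets are half-spaces), which reduces the worst-case masses to one-dimensional integrals depending on $\delta$ only through the appropriate norm. Imposing $\underline{P_A}(\delta) > \overline{P_B}(\delta)$ and solving for the boundary value of $\|\delta\|_p$ defines the certified radius $R(\underline{p_A},\overline{p_B})$; for Gaussian noise and $p=2$ this recovers $R=\frac{\sigma}{2}\big(\Phi^{-1}(\underline{p_A})-\Phi^{-1}(\overline{p_B})\big)$, and for the remaining $(\phi,p)$ pairs it recovers the corresponding known closed-form or implicitly defined radii.

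I anticipate the main obstacle to be this last step: computing the measure of the likelihood-ratio threshold sets for a general isotropic $\phi$ and extracting a clean dependence on $\|\delta\|_p$. For the Gaussian/$\ell_2$ case this is a one-line computation with the Gaussian CDF, but for other distributions (e.g.\ Laplace noise for $\ell_1$ or noise tailored to $\ell_\infty$) the level sets of the likelihood ratio need not be half-spaces, and bounding their measure uniformly over the \emph{direction} of $\delta$---so that the guarantee depends only on $\|\delta\|_p$ and not on its orientation---requires the isotropy of $\phi$ together with a symmetrization argument. Since the theorem only collects already-established per-$(\phi,p)$ results, I would either invoke those directly or carry out the measure computation case by case, with the Neyman--Pearson reduction above serving as the common skeleton.
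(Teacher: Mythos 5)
Your proposal is correct, and it reconstructs exactly the argument that this theorem rests on. Note, however, that the paper itself offers no proof of this statement: Theorem~\ref{thm:cohen's thm} is explicitly presented as a unified restatement of prior results, with the per-distribution, per-norm radii simply collected (in Table~\ref{tab:transform}) and attributed to the cited works of Cohen et al., Teng et al., Yang et al., and Lee et al. Your Neyman--Pearson skeleton --- the two-class reduction, the extremal likelihood-ratio sub/super-level sets achieving the worst-case shifted masses, the half-space specialization for Gaussian/$\ell_2$, and the case-by-case level-set analysis for the other $(\phi, p)$ pairs --- is precisely the proof machinery used in those cited sources, and your closing observation that the general case must either invoke those results or redo the measure computations per distribution matches how the paper itself handles the matter (by deferral). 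In short: correct, and the same approach as the proofs the paper relies on, merely written out rather than cited.
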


The certified radius $R(\underline{p_A}, \overline{p_B})$ quantifies the robustness of the smoothed classifier $g$ around the input $x$. It ensures that any adversarial perturbation $\delta$ with $\|\delta\|_p < R(\underline{p_A}, \overline{p_B})$ cannot change the predicted class. The exact form of $R$ varies depending on the noise distribution $\phi$ and the norm $\ell_p$. For example, when $\phi$ is an isotropic Gaussian distribution with standard deviation $\sigma$, and the $\ell_2$ norm is considered, the certified radius is given by \citep{cohen2019certified}:

\begin{equation}\label{eq:gaussian_radius}
\small
    R = \frac{\sigma}{2} \left( \Phi^{-1}(\underline{p_A}) - \Phi^{-1}(\overline{p_B}) \right)
\end{equation}

where $\Phi^{-1}$ is the inverse cumulative distribution function (CDF) of the standard normal distribution.

\subsection{Limitations of Isotropic Noise in Randomized Smoothing}

While randomized smoothing with isotropic noise provides a powerful tool for certified robustness, it applies the same noise distribution uniformly across all data dimensions. This isotropic approach may not fully exploit the potential robustness, as it ignores the heterogeneity and varying importance of different input features. Certain dimensions (e.g., pixels in an image) may be more sensitive or critical to the classification task, and treating them uniformly can limit the defense performance.

To address these limitations, our work extends randomized smoothing to anisotropic noise distributions, where different noise parameters can be assigned to different data dimensions. This extension poses significant challenges in developing universal theoretical guarantees and in designing efficient methods to optimize the noise parameters. In the following sections, we present our proposed UCAN framework, which overcomes these challenges to enhance certified robustness.

\section{Certified Robustness with Anisotropic Noise}
\label{sec: ARS}

\begin{figure}
    \centering
    \includegraphics[width=\linewidth]{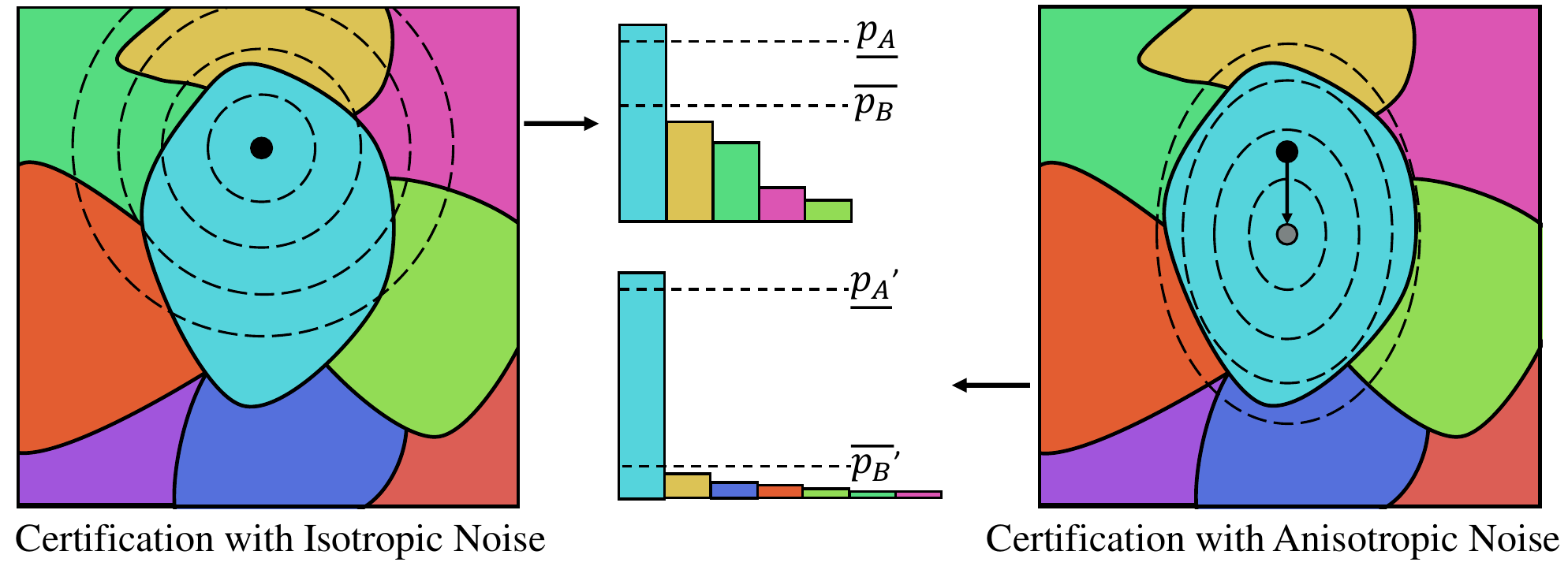}
    \caption{Anisotropic noise vs isotropic noise. The decision regions of $f$ are denoted in different colors. The dashed lines are the level sets of the noise distribution. The left figure shows the RS with isotropic Gaussian noise $\mathcal{N}(0, \lambda^2 \mathbf{I})$ in \cite{cohen2019certified} whereas the right figure shows the RS with anisotropic Gaussian noise $\mathcal{N}(\mu, \Sigma)$, where $\Sigma=\lambda^2 diag (\sigma_1^2, \sigma_2^2, ..., \sigma_d^2)$. Certification can be improved by enlarging the gap between $\underline{p_A}$ and $\overline{p_B}$.}\vspace{-0.2in}
    \label{fig:overview}
\end{figure}

In this section, we establish a universal theory for the certification via randomized smoothing with \emph{anisotropic} noise. Given any isotropic randomized smoothing methods, our method can universally transform them to anisotropic randomized smoothing for enhanced certified robustness. 

\subsection{General Anisotropic Noise}

Given an arbitrary isotropic noise $\epsilon$, we define the corresponding anisotropic noise $\epsilon'$ as:
\begin{equation}
\label{eq:anisotropic_noise_general}
\epsilon' = \epsilon^\top \Sigma + \mu
\end{equation}
where $\Sigma \in \mathbb{R}^{d \times d}$ is an \emph{invertible} covariance matrix, and $\mu \in \mathbb{R}^d$ is the mean offset vector. The covariance matrix $\Sigma$ introduces dependencies between different dimensions of the noise, capturing potential correlations, while $\mu$ allows for mean shifts in the noise distribution. See Figure \ref{fig:visualization} for examples.

Then, certified robustness with anisotropic noise can be ensured per Theorem \ref{thm:our thm}.

\begin{restatable}[\textbf{Asymmetric Randomized Smoothing via Universal Transformation}]{theorem}{thmone}
\label{thm:our thm}
Let $f: \mathbb{R}^d \to \mathcal{C}$ be any deterministic or randomized function. Suppose that for the multivariate random variable with isotropic noise $X = x + \epsilon$ in Theorem~\ref{thm:cohen's thm}, the certified radius function is $R(\cdot)$. Then, for the corresponding anisotropic input $Y = x + \epsilon^\top \Sigma + \mu$, if there exist $c'_A \in \mathcal{C}$ and $\underline{p_A}', \overline{p_B}' \in [0,1]$ such that:
\begin{equation}
\label{eq:condition_general}
\mathbb{P}\left( f(Y) = c'_A \right) \geq \underline{p_A}' \geq \overline{p_B}' \geq \max_{c \neq c'_A} \mathbb{P}\left( f(Y) = c \right)
\end{equation}
then for the anisotropic smoothed classifier $g'(x + \delta') \equiv \arg \max_{c \in \mathcal{C}} \mathbb{P}\left( f(Y + \delta') = c \right)$, we can guarantee $g'(x + \delta') = c'_A$ for all perturbations $\delta' \in \mathbb{R}^d$ such that:
\begin{equation}
\label{eq:guarantee_general}
\left\| \Sigma^{-1} \delta' \right\|_p \leq R\left( \underline{p_A}', \overline{p_B}' \right)
\end{equation}
provided that $\Sigma$ is invertible.
\end{restatable}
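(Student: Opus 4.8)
The plan is to prove Theorem~\ref{thm:our thm} by a clean reduction to the isotropic result in Theorem~\ref{thm:cohen's thm}, absorbing the affine transformation $\epsilon \mapsto \epsilon^\top \Sigma + \mu$ into a redefined base classifier so that the existing isotropic machinery applies verbatim. Concretely, I would introduce the composite classifier
\[
h(u) := f\!\left( x + (u - x)^\top \Sigma + \mu \right),
\]
which is again a (possibly randomized) map $\mathbb{R}^d \to \mathcal{C}$. The first key observation is that, for every realization of the isotropic noise $\epsilon$, we have $h(x+\epsilon) = f(x + \epsilon^\top \Sigma + \mu) = f(Y)$ as an identity of random variables, not merely an equality in distribution. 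Consequently $\mathbb{P}(h(X)=c) = \mathbb{P}(f(Y)=c)$ for every class $c$, so the hypothesis~\eqref{eq:condition_general} on $f(Y)$ is \emph{exactly} the probability-bound hypothesis~\eqref{eq:probability_bounds} required by Theorem~\ref{thm:cohen's thm} for the classifier $h$, with the same constants $\underline{p_A}',\overline{p_B}'$.

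First I would apply Theorem~\ref{thm:cohen's thm} to $h$ with its isotropic noise $\epsilon$, obtaining that the isotropic smoothed classifier $g_h(z) := \arg\max_{c} \mathbb{P}(h(z+\epsilon)=c)$ predicts $c'_A$ throughout the ball $\|\eta\|_p < R(\underline{p_A}',\overline{p_B}')$, i.e. $g_h(x+\eta)=c'_A$ there. The next step is to translate this certified region from the isotropic ``$\eta$-space'' into the anisotropic ``$\delta'$-space''. For every $\epsilon$ we have the identity
\[
h(x+\eta+\epsilon) = f\!\left( x + (\eta+\epsilon)^\top \Sigma + \mu \right) = f\!\left( Y + \eta^\top \Sigma \right),
\]
so, setting $\delta' = \eta^\top \Sigma$, the smoothed prediction $g_h(x+\eta)$ coincides with the anisotropic prediction $g'(x+\delta') = \arg\max_{c} \mathbb{P}(f(Y+\delta')=c)$. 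Note that the mean offset $\mu$ is absorbed as a fixed shift and does not enter the perturbation map; only the linear part $\Sigma$ acts on $\eta$.

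Finally I would invoke the invertibility of $\Sigma$ to rewrite the certified set. Since $\delta' = \eta^\top \Sigma$ is a bijection with inverse $\eta = \Sigma^{-1}\delta'$ (for the symmetric or diagonal $\Sigma$ used throughout; in the general case one merely tracks the transpose), the isotropic ball $\|\eta\|_p < R(\underline{p_A}',\overline{p_B}')$ maps precisely onto the ellipsoidal region $\|\Sigma^{-1}\delta'\|_p < R(\underline{p_A}',\overline{p_B}')$, yielding $g'(x+\delta')=c'_A$ on exactly the set claimed in~\eqref{eq:guarantee_general}. The main obstacle is conceptual rather than computational: recognizing that the \emph{entire} affine transformation can be pushed into the base classifier, so that no new certification inequality has to be re-derived from scratch, and then carefully bookkeeping the change of variables so that the perturbation ball $\|\eta\|_p \le R$ lands exactly on $\|\Sigma^{-1}\delta'\|_p \le R$. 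Invertibility of $\Sigma$ is essential precisely here, as it guarantees that the correspondence between $\eta$ and $\delta'$ is a bijection and hence that the full certified radius transfers without loss.
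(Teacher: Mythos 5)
Your proposal is correct and follows essentially the same route as the paper's own proof: both absorb the affine map $z \mapsto \Sigma(z-x) + x + \mu$ (the paper writes it as $h(z) = \Sigma z + \mu'$ with $\mu' = \mu + x - \Sigma x$) into a composed base classifier, invoke Theorem~\ref{thm:cohen's thm} on that classifier under the unchanged isotropic noise, and then transfer the certified ball via the change of variables $\delta' = \Sigma\delta$, $\delta = \Sigma^{-1}\delta'$. The only differences are notational (your explicit centering at $x$ versus the paper's offset $\mu'$, and your remark that the probability equality holds pathwise rather than merely in distribution), so nothing of substance separates the two arguments.
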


\begin{proof}
See the detailed proof in Appendix \ref{apd:proof}. 
\end{proof}

The general anisotropic noise with covariance allows for capturing correlations between different dimensions of the input data. However, adopting this generalized anisotropic noise introduces certain practical limitations: 1) \textbf{Invertibility of $\Sigma$}:
Our theory requires the covariance matrix $\Sigma$ to be invertible. If $\Sigma$ is not invertible, a small regularization term can be added to its diagonal to ensure invertibility and retain the validity of our certification guarantees. 2) \textbf{Computational Complexity}: Optimizing or learning a full covariance matrix $\Sigma \in \mathbb{R}^{d \times d}$ is computationally intensive, especially for high-dimensional data (e.g., images with $d = 150{,}528$ for ImageNet). The memory and computational requirements scale quadratically with the input dimension, making it impractical for large-scale applications.

Given these limitations, in practice, one might consider structured covariance matrices that balance expressiveness with computational efficiency, such as low-rank approximations, block-diagonal matrices, or sparse covariance matrices. In this paper, we focus on the independent anisotropic noise where noise parameters are independent along dimensions.

\begin{table*}[!t]
    \centering
    \caption{Certified radii (binary-case) for randomized smoothing with independent isotropic and anisotropic noise. $d$ is the dimension size. $\Phi^{-1}$ is the inverse CDF of Gaussian distribution. $\lambda$ is the scalar parameter of the isotropic noise. $\sigma$ is the anisotropic scale multiplier.}
    \resizebox{\linewidth}{!}{
    \begin{tabular}{c c c c c c c}
    \hline
    \hline
         Distribution & PDF & Adv. & Isotropic Guarantee& $\ell_p$ Radius for Iso. RS &Anisotropic Guarantee &   $\ell_p$ Radius for Ani. RS \\
    \hline
         Gaussian \cite{cohen2019certified} & $\propto e^{-\|\frac{z}{\lambda}\|_2^2}$ & $\ell_2$ & $ ||\delta||_2 \leq\lambda (\Phi^{-1}(\underline{p_A}))$ & $\lambda (\Phi^{-1}(\underline{p_A}))$ & $||\delta \oslash \sigma||_2 \leq\lambda (\Phi^{-1}(\underline{p_A}'))$ & $\min \{\sigma\}\lambda (\Phi^{-1}(\underline{p_A}))$ \\
         Gaussian \cite{yang2020randomized} & $\propto e^{-\|{\frac{z}{\lambda}}\|_2^2}$ & $\ell_1$ &  $||\delta||_1 \leq\lambda (\Phi^{-1}(\underline{p_A}))$ & $\lambda (\Phi^{-1}(\underline{p_A}))$&  $||\delta \oslash \sigma||_1 \leq \lambda (\Phi^{-1}(\underline{p_A}'))$ &$\min \{\sigma\} \lambda (\Phi^{-1}(\underline{p_A}))$ \\
         &  & $\ell_\infty$ & $ ||\delta||_\infty \leq\lambda (\Phi^{-1}(\underline{p_A}))/\sqrt{d}$ & $\lambda (\Phi^{-1}(\underline{p_A}))/\sqrt{d}$ & $ ||\delta \oslash \sigma||_\infty \leq \lambda (\Phi^{-1}(\underline{p_A}'))/\sqrt{d}$ &$\min \{\sigma\} \lambda (\Phi^{-1}(\underline{p_A}))/\sqrt{d}$ \\
         Laplace \cite{teng2020ell} & $\propto e^{-\|\frac{z}{\lambda}\|_1}$ & $\ell_1$ & $||\delta||_1 \leq -\lambda \log(2(1-\underline{p_A}))$ & $-\lambda \log(2(1-\underline{p_A}))$& $||\delta \oslash \sigma||_1 \leq- \lambda \log(2(1-\underline{p_A}'))$ &$-\min \{\sigma\}\lambda \log(2(1-\underline{p_A}))$ \\
         Exp.  $\ell_\infty$ \cite{yang2020randomized} & $\propto e^{-\|\frac{z}{\lambda}\|_\infty}$ & $\ell_1$ & $ ||\delta||_1 \leq 2d\lambda(\underline{p_A}-\frac{1}{2})$ & $2d\lambda(\underline{p_A}-\frac{1}{2})$& $||\delta \oslash \sigma||_1 \leq 2d \lambda(\underline{p_A}'-\frac{1}{2})$ &$2\min \{\sigma\}d\lambda(\underline{p_A}-\frac{1}{2})$ \\
          &  & $\ell_\infty$ & $||\delta||_\infty \leq \lambda \log(\frac{1}{2(1-\underline{p_A})})$ & $\lambda \log(\frac{1}{2(1-\underline{p_A})})$& $||\delta \oslash \sigma||_\infty \leq \lambda \log(\frac{1}{2(1-\underline{p_A}')})$ &$\min \{\sigma\}\lambda \log(\frac{1}{2(1-\underline{p_A})})$ \\ 
         Uniform $\ell_\infty$ \cite{lee2018simple}& $\propto \mathbb{I}(\|z\|_\infty\leq\lambda)$ & $\ell_1$ & $||\delta||_1 \leq 2\lambda (\underline{p_A}-\frac{1}{2})$ & $2\lambda (\underline{p_A}-\frac{1}{2})$ & $ ||\delta \oslash \sigma||_1 \leq 2 \lambda (\underline{p_A}'-\frac{1}{2})$ &$2\min \{\sigma\}\lambda (\underline{p_A}-\frac{1}{2})$ \\
         & & $\ell_\infty$ & $||\delta||_\infty \leq 2\lambda(1-\sqrt[d]{\frac{3}{2}-\underline{p_A}})$ & $2\lambda(1-\sqrt[d]{\frac{3}{2}-\underline{p_A}})$& $||\delta \oslash \sigma||_\infty \leq 2\lambda(1-\sqrt[d]{\frac{3}{2}-\underline{p_A}'})$ &$2\min \{\sigma\}\lambda(1-\sqrt[d]{\frac{3}{2}-\underline{p_A}})$ \\
         Power Law $\ell_\infty$ \cite{yang2020randomized} & $\propto \frac{1}{(1+\|\frac{z}{\lambda}\|_\infty)^a}$ & $\ell_1$ & $ ||\delta||_1 \leq \frac{2d\lambda}{a-d}(\underline{p_A}-\frac{1}{2})$ & $\frac{2d\lambda}{a-d}(\underline{p_A}-\frac{1}{2})$ & $||\delta \oslash \sigma||_1 \leq \frac{2d\lambda}{a-d}(\underline{p_A}-\frac{1}{2})$ &$\min \{\sigma\}\frac{2d\lambda}{a-d}(\underline{p_A}-\frac{1}{2})$ \\
    \hline
    \hline
    \end{tabular}}
    \vspace{-3mm}
     \label{tab:transform}
\end{table*}

\vspace{+0.05in}
\noindent \textbf{Special Case: Diagonal Covariance Matrix}. The case where $\Sigma$ is a diagonal matrix corresponds to anisotropic noise with independent dimensions (i.e., no covariance between dimensions). Let $\Sigma = \operatorname{diag}(\sigma_1, \sigma_2, \ldots, \sigma_d)$, where $\sigma_i > 0$ for all $i$. In this case, $\Sigma^{-1} = \operatorname{diag}\left( \frac{1}{\sigma_1}, \frac{1}{\sigma_2}, \ldots, \frac{1}{\sigma_d} \right)$.

\begin{corollary}[\textbf{Asymmetric Randomized Smoothing with Independent Anisotropic Noise}]
\label{corollary:diagonal}
Under the same conditions as Theorem~\ref{thm:our thm} if $\Sigma = \operatorname{diag}(\sigma_1, \sigma_2, \ldots, \sigma_d)$, then the certified robustness guarantee becomes:
\begin{equation}
\label{eq:guarantee_diagonal}
\left\| \delta' \oslash \sigma \right\|_p \leq R\left( \underline{p_A}', \overline{p_B}' \right)
\end{equation}
where $\sigma = [\sigma_1, \sigma_2, \ldots, \sigma_d]^\top$, $\oslash$ denotes element-wise division, and $\delta' \in \mathbb{R}^d$ is the perturbation in the anisotropic space.
\end{corollary}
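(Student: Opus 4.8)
The plan is to obtain the statement as a direct specialization of Theorem~\ref{thm:our thm}, since the corollary merely instantiates the general guarantee $\|\Sigma^{-1}\delta'\|_p \le R(\underline{p_A}', \overline{p_B}')$ at a diagonal covariance matrix. First I would note that the hypotheses of the corollary---namely $\Sigma = \operatorname{diag}(\sigma_1,\ldots,\sigma_d)$ with $\sigma_i > 0$ for all $i$---automatically satisfy the invertibility requirement of Theorem~\ref{thm:our thm}: a diagonal matrix is invertible precisely when all of its diagonal entries are nonzero, and here they are strictly positive. Its inverse is again diagonal, $\Sigma^{-1} = \operatorname{diag}(1/\sigma_1,\ldots,1/\sigma_d)$, so every structural assumption of the parent theorem carries over verbatim.

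The single substantive step is to rewrite the matrix--vector product $\Sigma^{-1}\delta'$ in element-wise form. Since $\Sigma^{-1}$ is diagonal, the $i$-th component of $\Sigma^{-1}\delta'$ is simply $\delta'_i/\sigma_i$, which by the definition of $\oslash$ is exactly the $i$-th component of $\delta' \oslash \sigma$. Hence $\Sigma^{-1}\delta' = \delta' \oslash \sigma$ as vectors, and consequently $\|\Sigma^{-1}\delta'\|_p = \|\delta' \oslash \sigma\|_p$ for every $p \in \{1,2,\infty\}$. Substituting this identity into the conclusion of Theorem~\ref{thm:our thm} yields precisely the claimed guarantee in Equation~\eqref{eq:guarantee_diagonal}.

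There is no genuine obstacle here beyond the bookkeeping above; the corollary is a transparent reduction rather than an independent result. The only point worth stating explicitly is that the invertibility proviso of Theorem~\ref{thm:our thm} is discharged for free by the positivity of the $\sigma_i$, so no diagonal regularization of $\Sigma$ is required in this special case. The entire argument thus amounts to a one-line identification of $\Sigma^{-1}\delta'$ with $\delta' \oslash \sigma$, followed by invoking the general theorem.
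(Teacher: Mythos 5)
Your proposal is correct and matches the paper's own proof essentially verbatim: both observe that a diagonal $\Sigma$ with positive entries has diagonal inverse $\operatorname{diag}(1/\sigma_1,\ldots,1/\sigma_d)$, identify $\Sigma^{-1}\delta'$ with $\delta' \oslash \sigma$, and substitute this norm identity into the guarantee of Theorem~\ref{thm:our thm}. Your additional remark that positivity of the $\sigma_i$ discharges the invertibility proviso is a minor, harmless elaboration of the same argument.
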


\begin{proof}
Since $\Sigma$ is diagonal, its inverse is also diagonal with entries $1/\sigma_i$. Therefore, we have:
\begin{equation}
\left\| \Sigma^{-1} \delta' \right\|_p = \left\| \delta' \oslash \sigma \right\|_p
\end{equation}
Substituting this into Equation~\eqref{eq:guarantee_general} yields the desired result.
\end{proof}

Theorem \ref{thm:our thm} provides a robustness guarantee with anisotropic noise, building on traditional isotropic RS theories. Equation \eqref{eq:guarantee_general} from this theorem is both explicit and widely applicable, significantly enhancing existing RS frameworks' performance. In Table \ref{tab:transform}, we outline some representative isotropic RS methods and their extension to anisotropic noise via Theorem \ref{thm:our thm}. Our method can also be readily adapted to other RS methods like those in \cite{zhang2020black,hong2022unicr} that lack explicit radius certifications, applying it directly to their numerical results. Details on the binary classifier version are discussed in Appendix \ref{apd:binary}.

In Figure \ref{fig:overview}, we clearly illustrate the significant benefits that the anisotropic noise can bring to randomized smoothing. In Theorem \ref{thm:our thm}, we observe that the mean offset $\mu$ does not affect the derivation of the certified robustness with anisotropic noise (Equation \eqref{eq:guarantee_general}). Thus, it is likely that the gap between probabilities $\underline{p_A}'$ and $\overline{p_B}'$ can be improved by a properly chosen mean offset of the anisotropic noise (without affecting the robustness guarantee). Additionally, with the heterogeneous variance, the anisotropic noise can better fit the different dimensions of the input without causing over-distortion.

\begin{corollary}
\label{corollary}
For the anisotropic input $Y$ in Theorem \ref{thm:our thm}, if Equation (\ref{eq:condition_general}) is satisfied, then $g'(x+\delta') \equiv \arg \max_{c\in \mathcal{C}} \mathbb{P}(f(Y+\delta')=c) =c'_A $ for all $||\delta'||_p\leq R'$ such that
\begin{equation}
\label{eq: worse-case guarantee}
    R'= \min\{\sigma\}R
\end{equation}
where $R$ is the certified radius of randomized smoothing via isotropic noise, and $\min \{\cdot \}$ denotes the minimum entry.

\begin{proof}
    The guarantee in Theorem \ref{thm:our thm} holds for $||\delta' \oslash \sigma||_p  \leq R$. Since $||\delta' \oslash \sigma||_p \leq ||\frac{\delta'}{\min \{\sigma\}}||_p$, if $||\frac{\delta'}{\min \{\sigma\}}||_p \leq R$, the guarantee still holds. This requires $||\delta'||_p\leq \min \{\sigma\} R$.
\end{proof}
\end{corollary}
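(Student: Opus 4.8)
The plan is to reduce the scaled-norm certificate already supplied by Corollary~\ref{corollary:diagonal} to a plain $\ell_p$-ball certificate by a coordinate-wise norm-domination argument. Concretely, Corollary~\ref{corollary:diagonal} (the diagonal specialization of Theorem~\ref{thm:our thm}) tells us that the anisotropic smoothed classifier $g'$ predicts $c'_A$ at $x+\delta'$ for every perturbation satisfying $\|\delta' \oslash \sigma\|_p \leq R$, where $R = R(\underline{p_A}', \overline{p_B}')$ is the isotropic certified radius. The target $R' = \min\{\sigma\}R$ is therefore obtained by finding the largest $\ell_p$-ball of perturbations $\delta'$ that is guaranteed to sit inside the anisotropic (scaled) region $\{\delta' : \|\delta' \oslash \sigma\|_p \leq R\}$.

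First I would record the coordinate-wise inequality that drives everything: since $\sigma_i \geq \min\{\sigma\} > 0$ for every dimension $i$ (positivity of the diagonal is inherited from Corollary~\ref{corollary:diagonal}), we have $|\delta'_i|/\sigma_i \leq |\delta'_i|/\min\{\sigma\}$ for each $i$. In other words, the vector $\delta' \oslash \sigma$ is dominated entry by entry in absolute value by the vector $\delta'/\min\{\sigma\}$.

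Next I would lift this entrywise domination to the $\ell_p$ norm. Because every $\ell_p$ norm with $p \in \{1,2,\infty\}$ is nondecreasing in the absolute value of each coordinate, the entrywise bound yields
\begin{equation*}
\|\delta' \oslash \sigma\|_p \;\leq\; \Big\| \tfrac{\delta'}{\min\{\sigma\}} \Big\|_p \;=\; \frac{\|\delta'\|_p}{\min\{\sigma\}}.
\end{equation*}
Consequently, whenever $\|\delta'\|_p \leq \min\{\sigma\}\,R$ we obtain $\|\delta' \oslash \sigma\|_p \leq R$, so $\delta'$ lies in the region certified by Corollary~\ref{corollary:diagonal}, and hence $g'(x+\delta') = c'_A$. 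This establishes that the isotropic-shaped radius $R' = \min\{\sigma\}R$ is certified, as claimed.

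I do not expect a genuine obstacle here; the content is a routine containment of an $\ell_p$ ball inside the anisotropic region of Equation~\eqref{eq:guarantee_diagonal}. The only point requiring a little care is that the norm-monotonicity step must hold uniformly across $p = 1,2,\infty$ — which it does — and that $\min\{\sigma\} > 0$, exactly the positivity hypothesis on the $\sigma_i$. A worthwhile closing remark is that $R'$ is the \emph{worst-case} isotropic radius one can extract from the anisotropic guarantee: the bound above is tight precisely along the coordinate direction attaining $\min\{\sigma\}$, while the full guarantee of Equation~\eqref{eq:guarantee_diagonal} remains strictly larger in every other direction, which is the sense in which anisotropic smoothing dominates its isotropic reduction.
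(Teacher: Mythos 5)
Your proposal is correct and follows essentially the same route as the paper's own proof: both start from the element-wise guarantee $\|\delta' \oslash \sigma\|_p \leq R$ and use the domination $\|\delta' \oslash \sigma\|_p \leq \|\delta'\|_p / \min\{\sigma\}$ to carve out the $\ell_p$-ball of radius $\min\{\sigma\}R$. Your version merely spells out the coordinate-wise inequality and the norm-monotonicity step that the paper leaves implicit, which is a fine (slightly more careful) rendering of the same argument.
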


\subsection{Certified Region and Two Metrics}

\begin{figure}
    \centering
    \includegraphics[width=\linewidth]{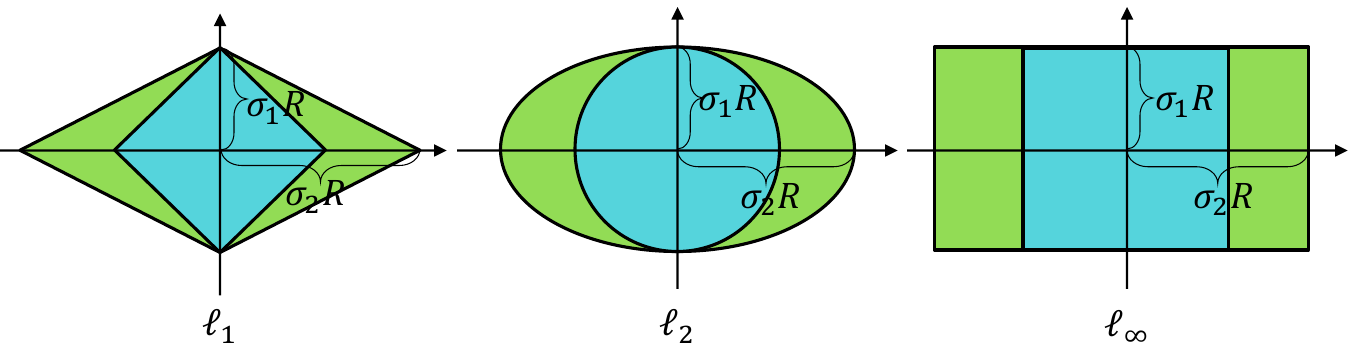}\
    \caption{An example illustration of the full certified region (green) and the certified radius (blue), where $\sigma_1 < \sigma_2$.}\vspace{-0.2in}
    \label{fig:anisotropic_vs_isotropic}
\end{figure}

In Corollary \ref{corollary}, we derive the certified radii $R'$ for asymmetric randomized smoothing in the formation of $\ell_p$-ball, i.e., $||\delta'||_p\leq R'$. While the certified radius reflects the maximum size of the tolerated perturbation within the $\ell_p$-ball. However, especially under asymmetric circumstances, the certified region can be highly asymmetric, and the $\ell_p$-ball (which is symmetric) may only represent a subset of the full robustness region, which is given by $(\sum_i^d(\frac{\delta'_i}{\sigma_i})^p)^\frac{1}{p} \leq R(\underline{p_A}', \overline{p_B}')$, as illustrated in Figure \ref{fig:anisotropic_vs_isotropic}. Therefore, to provide a more comprehensive and complementary assessment, we adopt an additional metric to measure the overall size of the certified region.

\vspace{0.05in}

\noindent\textbf{Radius and ALM for Certified Region}. In addition to the $\ell_p$ radius, we also adopt the Alternative Lebesgue Measure (ALM\footnote{This metric is mathematically equivalent to the ``proxy radius'' in \cite{eiras2021ancer}.}) for measuring the certified region. Specifically, we consider the certified region under the anisotropic guarantee as a $d$-dimensional super-ellipsoid, as defined in Definition \ref{def:super-ellipsoid delta}. It is worth noting that the $\ell_p$-norm ball is a sufficient but not necessary condition for certified robustness: while robustness within the $\ell_p$ ball is guaranteed, certain points outside this ball may also remain robust due to the true shape of the certified region. The super-ellipsoid formulation can represent this broader space.

\begin{definition}[\textbf{d-dimensional Generalized Super-ellipsoid of $\delta$}]
\label{def:super-ellipsoid delta}
The d-dimensional generalized super-ellipsoid ball of $\delta$ is defined as
\vspace{-8pt}
\begin{equation}
\scriptsize
    S(d,p)=\{(\delta_1,\delta_2,...,\delta_d):\sum_{i=1}^d|\frac{\delta_i}{\sigma_iR}|^{p}\leq 1, p>0\}
\end{equation}
\vspace{-20pt}
\end{definition}

\begin{theorem}[\textbf{Lebesgue Measure of the Robust Perturbation Set $S(d,p)$}]
\label{thm: Lebesgue measure}
Define $S(d,p)$ per Definition \ref{def:super-ellipsoid delta}, then the Lebesgue measure of the robust perturbation set is given by
\begin{equation}
\scriptsize
    V_S(d,p)=\frac{(2R\Gamma(1+\frac{1}{p}))^d\prod_{i=1}^d \sigma_i}{\Gamma(1+\frac{d}{p})}
\end{equation}
where $\Gamma$ is the Euler gamma function defined in Definition \ref{def: euler gamma} in Appendix \ref{apd: metric}.
\begin{proof}
    See the detail proof in Appendix \ref{apd: proof thm 5.4}.
\end{proof}
\end{theorem}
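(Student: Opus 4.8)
The plan is to reduce the super-ellipsoid $S(d,p)$ to the standard unit $\ell_p$-ball by a diagonal linear change of variables, and then to establish the classical volume formula for that ball via a Gaussian-type integral evaluated in two different ways.

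First, I would introduce the substitution $u_i = \delta_i/(\sigma_i R)$ for each coordinate, so that $\delta_i = \sigma_i R\, u_i$. Under this map the defining inequality $\sum_{i=1}^d |\delta_i/(\sigma_i R)|^p \leq 1$ becomes exactly $\sum_{i=1}^d |u_i|^p \leq 1$, i.e. the unit $\ell_p$-ball $B_p^d$. The Jacobian is diagonal with entries $\sigma_i R$, so $d\delta = \bigl(R^d \prod_{i=1}^d \sigma_i\bigr)\, du$, and therefore $V_S(d,p) = R^d \bigl(\prod_{i=1}^d \sigma_i\bigr)\,\mathrm{Vol}(B_p^d)$. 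It then remains to show $\mathrm{Vol}(B_p^d) = (2\Gamma(1+\tfrac1p))^d / \Gamma(1+\tfrac dp)$.

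To establish the unit-ball volume, I would evaluate $I = \int_{\mathbb{R}^d} e^{-\sum_{i=1}^d |u_i|^p}\, du$ in two ways. By Fubini it factorizes as a product of one-dimensional integrals, $I = \bigl(2\int_0^\infty e^{-t^p}\,dt\bigr)^d$, and the substitution $s = t^p$ turns each factor into $\tfrac1p\Gamma(\tfrac1p) = \Gamma(1+\tfrac1p)$, giving $I = (2\Gamma(1+\tfrac1p))^d$. Evaluating $I$ radially instead, I would use the homogeneity relation $\mathrm{Vol}(\{u : \|u\|_p \leq \rho\}) = \rho^d\,\mathrm{Vol}(B_p^d)$, differentiate in $\rho$ to obtain the measure carried by the level sets of $\|u\|_p$, and integrate $e^{-\rho^p}$ against it; the substitution $s=\rho^p$ then yields $I = \mathrm{Vol}(B_p^d)\cdot \tfrac dp\Gamma(\tfrac dp) = \mathrm{Vol}(B_p^d)\cdot \Gamma(1+\tfrac dp)$. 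Equating the two evaluations gives the claimed unit-ball volume, and substituting it back produces $V_S(d,p) = (2R\Gamma(1+\tfrac1p))^d \prod_{i=1}^d \sigma_i / \Gamma(1+\tfrac dp)$, as desired.

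I expect the main obstacle to be the radial evaluation of $I$: one must justify the coarea/layer-cake decomposition of Lebesgue measure along the level sets of $\|u\|_p$ and confirm that the enclosed volume scales exactly as $\rho^d$, so that differentiating in $\rho$ and re-integrating against $e^{-\rho^p}$ is legitimate. By contrast, the diagonal change of variables, the one-dimensional substitution, and the Gamma identity $\tfrac1p\Gamma(\tfrac1p)=\Gamma(1+\tfrac1p)$ are routine.
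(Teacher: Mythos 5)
Your proof is correct, but it takes a genuinely different route from the paper's. The paper's proof contains essentially no analysis: it quotes a known formula (Lemma~\ref{Lemma:volumn equation}, cited from \cite{ahmed2018volumes}) for the Lebesgue measure of a generalized super-ellipsoid with per-coordinate exponents $p_i$ and semi-axes $c_i$, then specializes $p_i = p$ and $c_i = \sigma_i R$ to read off the claimed expression. You instead derive everything from scratch: the diagonal change of variables with Jacobian $R^d \prod_{i=1}^d \sigma_i$ reduces $S(d,p)$ to the unit $\ell_p$-ball, and you recover Dirichlet's volume formula $\mathrm{Vol}(B_p^d) = (2\Gamma(1+\tfrac{1}{p}))^d / \Gamma(1+\tfrac{d}{p})$ by evaluating $I=\int_{\mathbb{R}^d} e^{-\|u\|_p^p}\,du$ once by Fubini and once radially; both evaluations check out (the radial one gives $I = d\,V_d\cdot\tfrac{1}{p}\Gamma(\tfrac{d}{p}) = V_d\,\Gamma(1+\tfrac{d}{p})$), so the final formulas agree. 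What your route buys is self-containedness, since the paper's correctness rests entirely on the external lemma; what the paper's route buys is brevity and a more general starting point (heterogeneous exponents $p_i$). As for the step you flag as the main obstacle, it can be dissolved entirely: instead of differentiating the enclosed volume in $\rho$, write $e^{-\|u\|_p^p} = \int_{\|u\|_p^p}^{\infty} e^{-t}\,dt$ and apply Tonelli, which yields $I = \int_0^\infty e^{-t}\,\mathrm{Vol}\bigl(\{u : \|u\|_p \le t^{1/p}\}\bigr)\,dt = \mathrm{Vol}(B_p^d)\int_0^\infty e^{-t} t^{d/p}\,dt = \mathrm{Vol}(B_p^d)\,\Gamma(1+\tfrac{d}{p})$; the only geometric input is the exact scaling $\mathrm{Vol}(\rho B_p^d) = \rho^d\,\mathrm{Vol}(B_p^d)$, which is immediate from the dilation invariance of Lebesgue measure, so no coarea or level-set decomposition is needed.
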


Recall that we aim to find an auxiliary metric that can measure the volume of this super-ellipsoid, so we derive the Lebesgue Measure of this super-ellipsoid as in Theorem \ref{thm: Lebesgue measure} since the Lebesgue Measure quantifies the ``volume'' of high-dimensional space, and then we simplify it by removing the constants w.r.t. the dimension and $p$, as well as transforming it to the radius scale. To this end, the ALM can be simplified as: $ALM=\sqrt[d]{\prod_{i=1}^d\sigma_i}R$.

\begin{figure*}[!th]
\centering
    \begin{subfigure}{0.4\linewidth}
    \centering
    \includegraphics[width=0.9\linewidth]{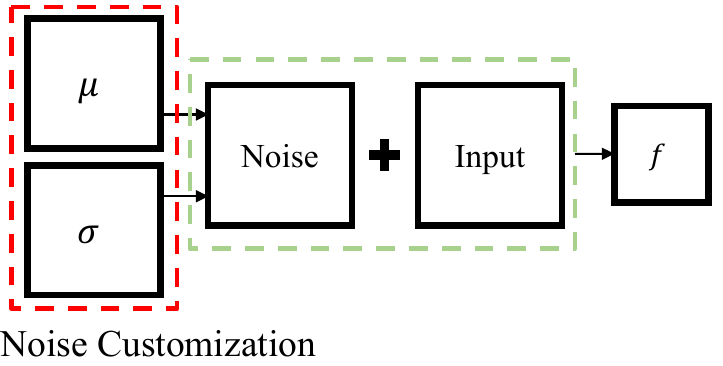}
    \end{subfigure}
\begin{subfigure}{0.55\linewidth}
    \centering
    \includegraphics[width=0.98\linewidth]{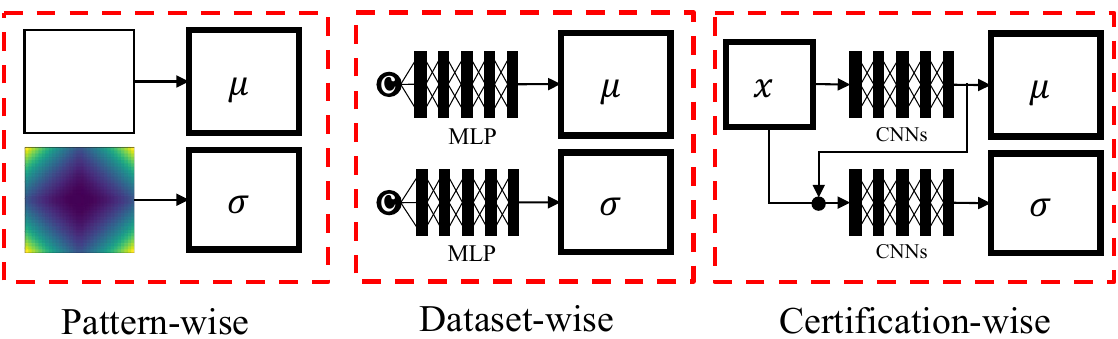}
    \end{subfigure}
\caption{Left: the framework for customizing anisotropic noise. Right: three example noise parameter generators (NPGs).}\vspace{-0.15in}
\label{fig:unif}
\end{figure*}

It serves as an additional measure for evaluating the robustness region. This is particularly useful because the $\ell_p$ radius (\emph{as a more strict metric}) cannot fully capture the robustness region in certain dimensions due to its inherent symmetry, as illustrated in Figure \ref{fig:anisotropic_vs_isotropic}. In summary, the $\ell_p$ radius serves as a conservative, worst-direction certificate—being controlled by the smallest anisotropic scale $\min_i \sigma_i$—whereas the ALM provides an auxiliary, volume-oriented characterization of the full certified region through the geometric mean $\big(\prod_{i=1}^d \sigma_i\big)^{1/d}$. It is worth noting that ALM subsumes the certified radius: in any dimension, the certified radius corresponds to the smallest semi-axis length of the certified region, while ALM (as the geometric mean of all semi-axes times $R$) is always greater than or equal to this value, and coincides with it in the isotropic case. Therefore, ALM serves as an auxiliary metric for the size of certified region instead of a formal guarantee. See Appendix \ref{apd: metric} for detailed analysis and discussions for ALM.

\section{Customizing Anisotropic Noise}
\label{sec: Customize}

Theorem \ref{thm:our thm} formally guarantees robustness when heterogeneous noise parameters are assigned across different data dimensions. However, finding more optimal heterogeneous noise parameters rather than randomly assigning them remains a challenge. To this end, in UCAN, we design a unified framework to customize anisotropic noise for randomized smoothing (left in Figure \ref{fig:unif}), which includes three \emph{noise parameter generators} (NPGs) with different scales of trainable parameters (right in Figure \ref{fig:unif}) and optimality levels. 

Here, the ``optimality'' refers to the degree to which each NPG can optimize the noise parameters to maximize certified robustness (measured by either certified radius or ALM), while maintaining prediction accuracy. 

\begin{itemize}[leftmargin=20pt]

\item \textbf{Pattern-wise Anisotropic Noise} (Low optimality): Uses pre-defined patterns for noise variances, offering basic but non-adaptive and relatively lower optimal robustness.  

\vspace{0.05in}

\item \textbf{Dataset-wise Anisotropic Noise} (Moderate optimality): Learns a global set of noise parameters optimized for the entire dataset, enabling some adaptation for different input data but still derived at the dataset level.

\vspace{0.05in}

\item \textbf{Certification-wise Anisotropic Noise} (High optimality): Generates noise parameters specifically tailored to each individual input at the certification time, achieving the most fine-grained and input-specific robustness optimization.

\end{itemize}

The optimality levels reflect a fundamental trade-off: higher-optimality approaches allow more precise and effective maximization of certified robustness, but require increased computational resources for training and/or inference. These three NPGs are representative examples—other designs are possible depending on application needs. In any case, the covariance matrix should be kept invertible (e.g., by adding a small positive constant to the diagonal if necessary). Implementation details and specific algorithms can be found in Appendix \ref{apd: algorithms}.

\subsection{Pattern-wise Anisotropic Noise}

The motivation for pattern-wise anisotropic noise in NPG is based on the understanding that different data regions affect predictions differently \citep{gilpin2018explaining}. Typically, an image's center contains more critical visual information, requiring lower variance to preserve clarity, whereas the borders may accommodate higher variance without significantly impacting predictions. Thus, this NPG utilizes fixed spatial patterns for anisotropic noise.

\begin{figure}
    \centering
    \includegraphics[width=0.95\linewidth]{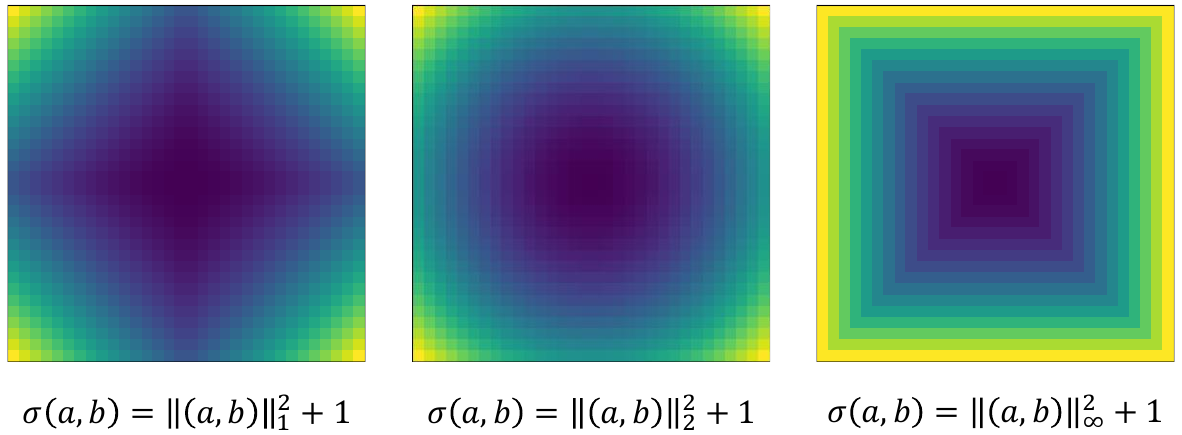}
    \caption{Spatial distributions for noise variances  (pattern-wise).}\vspace{-0.15in}
    \label{fig:pre-assigned vis}
\end{figure}


Consider a function $\sigma(a,b)$ representing the variance distribution across an image, where $(a,b)$ are the coordinates with the center at $(0,0)$. The variance is intuitively smaller at the center than at the borders due to varying feature importance. We propose three distinct spatial distribution types:

\begin{equation}
\label{eq:patterns}
\small
    \sigma(a,b)=\kappa||(a,b)||_p^2 +\iota, \ \ p=1,2,\infty
\end{equation}

where $||(a,b)||_p$ denotes the $\ell_p$-norm distance between $(a,b)$ and $(0,0)$, $\kappa$ is a constant parameter tuning the overall magnitude of the variance, $\iota$ denotes the variance of a center pixel since $\sigma(0,0)=\iota$, and $\iota>0$ such that $ \sigma(a,b) > 0$. Figure \ref{fig:pre-assigned vis} shows three example spatial distributions when $p=1,2,\infty$, where $\kappa=1$ and $\iota=1$. The noise mean is set as $0$ to avoid unnecessary deviation in the images.


Pattern-wise anisotropic noise modifies predictions based on spatial contributions (e.g., pixels). Yet, without fine-tuning, this approach may impair performance, particularly with datasets' diverse characteristics.
Then we propose an automated method to fine-tune the variance spatial distribution of anisotropic noise for each dataset.



\subsection{Dataset-wise Anisotropic Noise}


The NPG employs a constant-input neural network generator \citep{creswell2018generative} to learn asymmetric variances during noise-based robust training (see Figure \ref{fig:unif} right). The generator outputs tensors for anisotropic $\sigma$ and $\mu$, where $\mu_i$ and $\sigma_i$ represent the mean offset and variance multiplier per pixel, respectively. Anisotropic noise $\epsilon'=\epsilon^\top\sigma+\mu$ is will be generated from base isotropic noise $\epsilon$ and added to the input for randomized smoothing.


\vspace{0.05in}

\noindent\textbf{Architecture}. 
We adopt the generator architecture (a multi-layer perception) in Generative Adversarial Network (GAN) \citep{goodfellow2020generative} to design a novel neural network generator. Different from GAN, this NPG does not depend on the input data but depends on the entire dataset. Therefore, we fixed the input as constants. Following \citep{goodfellow2020generative}, the NPG consists of $5$ linear layers, and the first $4$ of them are followed by activation layers. The output will be then transformed by a hyperbolic tangent function with an amplification factor $\gamma$, i.e., $\gamma tanh(\cdot)$. This amplified hyperbolic tangent layer limits the value of the variances since an infinite value in the noise parameters will crash the training.  It is worth noting that for other tasks in Natural Language Processing or Audio Processing, other NPG structures, e.g., transformer, NNs, or even heuristic algorithms can be also designed to fit the targeted tasks.

\vspace{+0.05in}
\noindent\textbf{Loss Function}. 
The NPG and classifier can be trained together for optimal synergy. Designing an appropriate loss function is crucial for guiding NPG to desired convergence, targeting enhanced certified robustness via the ALM or certified radius. We aim to maximize either $\sqrt[d]{\prod \sigma}R$ or $\min\{\sigma\}R$, leading to two loss function variants that focus on increasing $\sqrt[d]{\prod \sigma}$ (equivalent to $mean\{\sigma\}$ in log scale) or $\min\{\sigma\}$, respectively. As the certified radius $R$ is influenced by prediction accuracy against noise, enhancing this accuracy also boosts $R$, aligning with the smoothed classifier's training objectives.

\begin{equation}\scriptsize
    \mathcal{L}(\theta_f,\theta_g)= -\underbrace{\textit{mean/min}\{\sigma(\theta_g)\}}_{\textit{Variance Loss}} + \underbrace{\sum_{k=1}^N y_k \log\hat{y}_k(x+\epsilon^\top\sigma(\theta_g)+\mu(\theta_g), \theta_f, \theta_g)}_{\textit{Smoothing Loss}}
\end{equation}

where the variance loss can be $\textit{mean} \{\sigma(\theta_g)\}$ or $\textit{min} \{\sigma(\theta_g)\}$ for improving ALM or certified radius, respectively. $\theta_f$ and $\theta_g$ denote the model parameters of the classifier and parameter generator, respectively, $k$ denotes the prediction class, $N$ represents the total number of classes, $y_k$ denotes the label of input $x$, and $\hat{y}_k$ is the prediction of $y_k$. The training of the NPG for $\mu$ is also guided by the smoothing loss to improve the prediction over the dataset.






\subsection{Certification-wise Anisotropic Noise}

While dataset-wise anisotropic noise fine-tunes parameters during training for improved robustness, it doesn't fully account for the heterogeneity among different input samples. The certification also varies by input, being valid only for the certified input and the corresponding radius. Thus, we propose a certification-wise NPG that generates tailored anisotropic noise for each sample, considering heterogeneity across both inputs and data dimensions.
Unlike dataset-wise noise, the parameter generators for $\mu$ and $\sigma$ in certification-wise NPG are cascaded, not parallel (see right in Figure \ref{fig:unif}). The mean parameter generator first processes the input $x$ to produce a $\mu$ map, followed by the variance generator using $x+\mu$ to generate a $\sigma$ map. Then the smoothed classifier is based on the generated $\mu$ and $\sigma$ through the classification.


\begin{figure}[!th]
    \centering
    \includegraphics[width=0.95\linewidth]{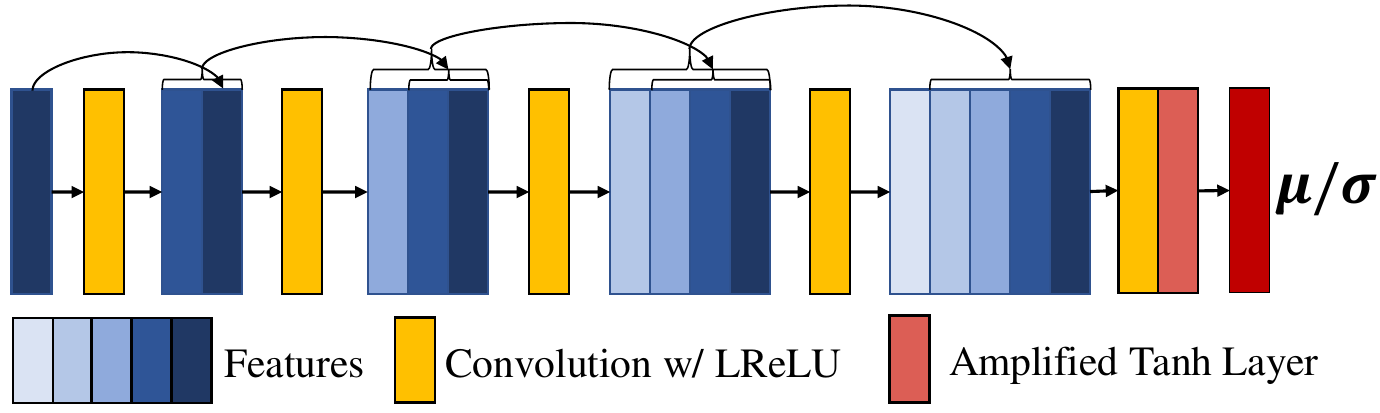}
    \caption{Architecture of parameter generator for certification-wise anisotropic noise.}\vspace{-0.1in}
    \label{fig:noisegenerator}
\end{figure}

\vspace{+0.05in}
\noindent\textbf{Architecture and Loss Function}. This NPG learns the mapping from the image to the $\mu$ and $\sigma$ maps, which is similar to the function of neural networks in image transformation tasks. Hence, inspired by the techniques used in image super-resolution \citep{zhang2018residual}, we also adopt the ``dense blocks'' \cite{huang2017densely} as the main architecture to design the NPG (see Figure \ref{fig:noisegenerator}). It consists of 4 convolutional layers followed by leaky-ReLU \cite{xu2015empirical}. Similar to the generator in dataset-wise anisotropic noise, the output is rectified by the amplified hyperbolic tangent function to stabilize the overall training process. Note that our parameter generator is a relatively small network (5 layers), thus it can be plugged in before any classifier for generating the certification-wise anisotropic noise without consuming too many computing resources (see Section \ref{apd: efficiency} for a detailed discussion on running time). For both $\mu$ and $\sigma$, we train a corresponding parameter generator for each using the same architecture. The loss function is similar to that used for the dataset-wise anisotropic noise, but the NPG takes $x$ as input and outputs $\mu$ and $\sigma$.

\subsection{Practical Algorithms}
\label{apd: algorithms}

Following Cohen et al. \cite{cohen2019certified}, we also use the Monte Carlo algorithm to bound the prediction probabilities of smoothed classifier and compute the ALM (certified region). Different from Cohen et al. \cite{cohen2019certified}, our noise distributions are either pre-assigned (as pattern-wise) or produced by the parameter generator (either dataset-wise or certification-wise). Our algorithms for certification and prediction using different noise generation methods are summarized in Algorithm \ref{alg:certify} and \ref{alg:predict} (w.l.o.g., taking the binary classifier as an example).

For simplicity of notations, the generation of anisotropic $\mu$ and $\sigma$ are summarized by the noise generation method(s) $M$. In the case of pattern-wise anisotropic noise, $M$ outputs pre-assigned fixed variance and zero-means; in case of dataset-wise and certification-wise anisotropic noise, $M$ adopts the parameter generators to generate the mean and variance maps. In the certification (Algorithm \ref{alg:certify}), we select the top-$1$ class $\hat{c_A}$ by the \textsc{ClassifySamples} function, in which the base classifier outputs the prediction on the noisy input sampled from the noise distribution. Once the top-$1$ class is determined, classification will be executed on more samples and the \textsc{LowerConfBound} function will output the lower bound of the probability $\underline{p_A}'$ computed by the Binomial test. If $\underline{p_A}'>\frac{1}{2}$, we output the prediction class and the ALM (measuring the certified region). Otherwise, it outputs ABSTAIN. In the prediction (Algorithm \ref{alg:predict}), we also generate the noise and then compute the prediction counts over the noisy inputs. If the Binomial test succeeds, then it outputs the prediction class. Otherwise, it returns ABSTAIN.

\begin{algorithm}[!t]\small
   \caption{UCAN-Certification}
   \label{alg:certify}
    \begin{algorithmic}[1]
   \Statex {\bfseries Given:} Base classifier $f$, anisotropic noise generation method $M$, input (e.g., image) $x$, number of Monte Carlo samples $n_0$ and $n$, confidence $1-\alpha$
   \State $\mathbf{\mu}$, $\mathbf{\sigma} \leftarrow M$
   \State $counts\_select \leftarrow $\textsc{ClassifySamples}$(f,x,\mu,\mathbf{\sigma},n_0)$
   \State $\hat{c}_A \leftarrow \text{\textbf{top index in}} \  counts\_select$
   \State $counts \leftarrow$ \textsc{ClassifySamples}$(f,x,\mu,\mathbf{\sigma},n)$
   \State $\underline{p_A}' \leftarrow$ \textsc{LowerConfBound}$(counts[\hat{c}_A],n,1-\alpha)$
    \If{$\underline{p_A}'>\frac{1}{2}$}
        \State \textbf{return} prediction $\hat{c}_A$ and certified radius $\min \{\sigma\}$R/ALM $\sqrt[d]{\prod_{i}\sigma_i}R$
    \Else
        \State \textbf{return} ABSTAIN
    \EndIf
   
\end{algorithmic}
\end{algorithm}

\begin{algorithm}[!t]\small
\caption{UCAN-Prediction}
\label{alg:predict}
\begin{algorithmic}[1]
\Statex \textbf{Given}: Base classifier $f$, anisotropic noise generation method $M$, input (e.g., image) $x$, number of Monte Carlo samples $n$, confidence $1-\alpha$
\State $\mathbf{\mu}$, $\mathbf{\sigma} \leftarrow M$
\State $counts \leftarrow$ \textsc{ClassifySamples}$(f,x,\mu,\mathbf{\sigma},n)$
\State $\hat{c}_A \leftarrow \text{\textbf{top index in}} \  counts$
\State $n_A \leftarrow counts[\hat{c}_A]$
\If{\textsc{BinomialPValue}$(n_A,n,0.5) \le \alpha$}
    \State \textbf{return} prediction $\hat{c}_A$
\Else
    \State \textbf{return} ABSTAIN
\EndIf
   
\end{algorithmic}
\end{algorithm}

\section{Soundness Analysis for Certification-Wise Anisotropic Randomized Smoothing}
\label{sec:soundness}
In this section, we address the potential soundness pitfall in existing input-dependent randomized smoothing methods and demonstrate how our certification-wise approach provides enhanced robustness guarantees. Specifically, we provide formal definitions, theorems, and detailed proofs to establish the soundness of our method. Additionally, we carefully explain why randomized smoothing inherently depends on the clean input for reliable certification.

\subsection{Potential Soundness Pitfall in Existing Input-Dependent Randomized Smoothing Methods}

Recent attempts to enhance randomized smoothing have introduced input-dependent noise parameters that vary with both the input $x$ and potential adversarial perturbations $\delta$ \cite{eiras2021ancer, rumezhak2023rancer}. In these methods, the noise parameters $\mu(x, \delta)$ and $\sigma(x, \delta)$ are functions of both the clean input and the perturbation, leading to noise distributions that change based on the adversary's actions.

While such approaches aim to tailor the noise to each input and perturbation, it introduce potential concerns in the certified robustness guarantee. Specifically, when certifying a sample $x$, the noise is generated as $\epsilon(x)$, depending on $x$. However, when applying the guarantee to a potentially perturbed sample $x + \delta$, the noise changes to $\epsilon(x + \delta)$, which likely follows a different distribution from $\epsilon(x)$. Randomized smoothing requires that the prediction on the certified input $x$ and the perturbed input $x + \delta$ be based on the same smoothed classifier with the same noise distribution \cite{cohen2019certified}. Therefore, this input-dependent noise may affect the soundness of randomized smoothing. As noted in \cite{eiras2021ancer, rumezhak2023rancer}, an additional component (e.g., fixing the distribution or adopting memory-based certification) has been utilized to maintain valid guarantees by sacrificing the system performance, as the ``price paid for soundness''.

\subsection{Certification-wise Anisotropic Randomized Smoothing}

Different from \cite{eiras2021ancer, rumezhak2023rancer}, in our \emph{certification-wise} anisotropic randomized smoothing method, the noise parameters $\mu(x)$ and $\sigma(x)$ depend solely on the clean input $x$ and remain the same when applied to any potential perturbed samples during certification. This design ensures that the smoothed classifier remains consistent across all perturbations applied to $x$, inherently preserving the soundness of the certified robustness guarantee.

After generating the fine-tuned noise on the clean input $x$ (to be certified), our method constructs a robustness region that conceptually ensures robustness for $x$ against any perturbation $\delta$ within the certified radius, rather than actually injecting noise into the perturbed inputs. It is worth noting that the theorems in this paper universally work for isotropic and anisotropic noise independent of the noise generation method, which is sound for any randomized smoothing method. Furthermore, the ``pattern-wise'' and ``dataset-wise'' noise in Section~\ref{sec: Customize} have no potential concerns regarding this soundness problem.

\subsection{Certified Robustness Guarantee on Certification-wise Noise}

We now formally establish the soundness of our method by proving that the certified robustness guarantee holds under our certification-wise anisotropic noise.

\begin{theorem}[Certified Robustness of Certification-Wise Anisotropic Randomized Smoothing]
\label{thm:soundness}
Let $f: \mathbb{R}^d \to \mathcal{C}$ be any deterministic or randomized base classifier. Suppose that for the random variable $X = x + \epsilon$, where $\epsilon$ is drawn from an isotropic distribution, the certified radius function is $R(\cdot)$. Define the anisotropic random variable $Y = x + \epsilon^\top \sigma(x) + \mu(x)$, where $\sigma(x) \in \mathbb{R}^d$ and $\mu(x) \in \mathbb{R}^d$ are functions of $x$ only. If there exist $c_A \in \mathcal{C}$ and bounds $\underline{p_A}, \overline{p_B} \in [0,1]$ such that:
\begin{equation}
\label{eq:anisotropic_condition}
    \mathbb{P}_{\epsilon}\left( f(Y) = c_A \right) \geq \underline{p_A} \geq \overline{p_B} \geq \max_{c \neq c_A} \mathbb{P}_{\epsilon}\left( f(Y) = c \right)
\end{equation}
then, for any perturbation $\delta \in \mathbb{R}^d$ satisfying:
\begin{equation}
\label{eq:anisotropic_radius}
    \left\| \delta \oslash \sigma(x) \right\|_p \leq R\left( \underline{p_A}, \overline{p_B} \right)
\end{equation}
the smoothed classifier $g$ will consistently predict class $c_A$ at $x + \delta$, i.e., $g(x + \delta) = c_A$. Here, $\oslash$ denotes element-wise division, and $\| \cdot \|_p$ denotes the $\ell_p$ norm.
\end{theorem}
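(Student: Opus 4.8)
The plan is to recognize that Theorem~\ref{thm:soundness} is a structured instance of Theorem~\ref{thm:our thm}: set the covariance matrix to the diagonal $\Sigma = \operatorname{diag}(\sigma(x))$ and the mean offset to $\mu(x)$, both computed once from the clean input $x$. With these identifications, $Y = x + \epsilon^\top \operatorname{diag}(\sigma(x)) + \mu(x)$ is precisely the anisotropic input of Theorem~\ref{thm:our thm}, and condition~\eqref{eq:anisotropic_condition} is exactly the probability-gap hypothesis~\eqref{eq:condition_general} with $c_A, \underline{p_A}, \overline{p_B}$ in place of their primed counterparts. The non-routine content is therefore not the algebra but the soundness distinction that makes this instantiation legitimate.

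First I would fix the noise parameters at the values $\sigma(x), \mu(x)$ and define the smoothed classifier $g$ through this frozen distribution, so that for every query point $z$ we have $g(z) = \arg\max_{c} \mathbb{P}_\epsilon\!\left(f(z + \epsilon^\top \operatorname{diag}(\sigma(x)) + \mu(x)) = c\right)$. The decisive step is to verify that evaluating $g$ at the perturbed point $x + \delta$ keeps these parameters frozen, i.e.\ $g(x+\delta) = \arg\max_{c} \mathbb{P}_\epsilon\!\left(f(x + \delta + \epsilon^\top \operatorname{diag}(\sigma(x)) + \mu(x)) = c\right)$ with $\sigma, \mu$ still tied to the clean $x$ rather than recomputed as $\sigma(x+\delta), \mu(x+\delta)$. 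This guarantees that the certified input and every perturbed input share a single noise distribution, which is exactly the consistency requirement underlying Theorem~\ref{thm:cohen's thm} and hence Theorem~\ref{thm:our thm}.

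Once this frozen-distribution classifier is in hand, the conclusion follows mechanically: Theorem~\ref{thm:our thm} yields $g(x+\delta) = c_A$ whenever $\|\Sigma^{-1}\delta\|_p \leq R(\underline{p_A}, \overline{p_B})$, and since $\Sigma^{-1} = \operatorname{diag}(1/\sigma_1(x), \ldots, 1/\sigma_d(x))$ the left-hand side equals $\|\delta \oslash \sigma(x)\|_p$ by the identity established in Corollary~\ref{corollary:diagonal}. Substituting gives the radius condition~\eqref{eq:anisotropic_radius}, completing the argument.

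I expect the main obstacle to be conceptual rather than computational: making the soundness distinction airtight. The temptation---and the flaw in the input-dependent schemes of \cite{eiras2021ancer, rumezhak2023rancer}---is to let $\sigma$ and $\mu$ track the actual (possibly adversarial) input, which produces a family of different smoothed classifiers indexed by $\delta$ and thereby voids the single-distribution premise of randomized smoothing. I would therefore devote the most care to arguing that because $\sigma(\cdot), \mu(\cdot)$ are evaluated solely at the clean $x$, the map $\delta \mapsto g(x+\delta)$ is governed by one fixed distribution, so that the hypotheses of Theorem~\ref{thm:our thm} are genuinely satisfied and the radius bound transfers without loss.
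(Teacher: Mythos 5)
Your proposal is correct and takes essentially the same route as the paper: the paper's own proof of Theorem~\ref{thm:soundness} simply re-runs the Theorem~\ref{thm:our thm} argument with the parameters frozen at the clean input --- it defines the fixed transformation $h_x(z) = z^\top \sigma(x) + \mu(x)$, composes $f' = f \circ h_x$, applies the isotropic guarantee of Theorem~\ref{thm:cohen's thm} to $f'$, and transfers the radius, which is exactly what your instantiation $\Sigma = \operatorname{diag}(\sigma(x))$, $\mu = \mu(x)$ of Theorem~\ref{thm:our thm} together with Corollary~\ref{corollary:diagonal} accomplishes by citation. Your emphasis on the frozen-distribution point (that $g(x+\delta)$ is evaluated with $\sigma(x),\mu(x)$ and never with $\sigma(x+\delta),\mu(x+\delta)$) is precisely the soundness observation the paper's proof makes explicit; the only difference is organizational, in that you invoke Theorem~\ref{thm:our thm} as a black box while the paper inlines its proof.
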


\subsection{Proof of Theorem~\ref{thm:soundness}}

\begin{proof}
Let $X = x + \epsilon$, where $\epsilon \in \mathbb{R}^d$ follows an isotropic noise distribution. Define the anisotropic random variable $Y = x + \epsilon^\top \sigma(x) + \mu(x)$.

Define a transformation $h_x: \mathbb{R}^d \to \mathbb{R}^d$ specific to input $x$:
\begin{equation}
\label{eq:transform_h}
    h_x(z) = z^\top \sigma(x) + \mu(x)
\end{equation}
where the multiplication and addition are element-wise.

Given any deterministic or randomized function $f : \mathbb{R}^d \to \mathcal{C}$, consider the composed classifier $f' = f \circ h_x$, mapping $z \mapsto f(h_x(z))$.

Under the transformation, the condition on the class probabilities becomes:
\begin{align}
    \mathbb{P}_{\epsilon}\left( f'(X) = c_A \right) &= \mathbb{P}_{\epsilon}\left( f(h_x(X)) = c_A \right) \\
    &= \mathbb{P}_{\epsilon}\left( f(Y) = c_A \right) \geq \underline{p_A} \\
    \max_{c \neq c_A} \mathbb{P}_{\epsilon}\left( f'(X) = c \right) &= \max_{c \neq c_A} \mathbb{P}_{\epsilon}\left( f(Y) = c \right) \leq \overline{p_B}
\end{align}
Thus, the probability bounds required for certification are satisfied by $f'$ under isotropic noise $\epsilon$.

From the standard randomized smoothing theory (e.g., Theorem~\ref{thm:cohen's thm}), since the transformed classifier $f'$ satisfies the probability bounds with respect to the isotropic noise $\epsilon$, we have that for any perturbation $\delta' \in \mathbb{R}^d$ satisfying $\| \delta' \|_p \leq R(\underline{p_A}, \overline{p_B})$, the prediction of $f'$ remains constant:
\begin{equation}
\label{eq:isotropic_guarantee}
    \arg \max_{c \in \mathcal{C}} \mathbb{P}_{\epsilon}\left( f'(x + \delta' + \epsilon) = c \right) = c_A
\end{equation}

Now, consider a perturbation $\delta \in \mathbb{R}^d$ in the original input space such that $\delta' = \delta \oslash \sigma(x)$. Then, the perturbed input after transformation is:
\begin{align}
    h_x(x + \delta) &= (x + \delta)^\top \sigma(x) + \mu(x) \\
    &= x^\top \sigma(x) + \delta^\top \sigma(x) + \mu(x) \\
    &= h_x(x) + \delta^\top \sigma(x)
\end{align}

Substituting back, we have:
\begin{align}
    g(x + \delta) &= \arg\max_{c \in \mathcal{C}} \mathbb{P}_{\epsilon}\left( f\left( x + \delta + \epsilon^\top \sigma(x) + \mu(x) \right) = c \right) \\
    &= \arg\max_{c \in \mathcal{C}} \mathbb{P}_{\epsilon}\left( f\left( h_x(x) + \delta^\top \sigma(x) + \epsilon^\top \sigma(x) \right) = c \right)
\end{align}

Since $\delta^\top \sigma(x) + \epsilon^\top \sigma(x) = (\delta + \epsilon)^\top \sigma(x)$, we have:
\begin{align}
    g(x + \delta) &= \arg\max_{c \in \mathcal{C}} \mathbb{P}_{\epsilon}\left( f\left( h_x(x) + (\delta + \epsilon)^\top \sigma(x) \right) = c \right) \\
    &= \arg\max_{c \in \mathcal{C}} \mathbb{P}_{\epsilon}\left( f'\left( x + \delta' + \epsilon \right) = c \right)
\end{align}

By the robustness of $f'$ under isotropic noise (Equation~\eqref{eq:isotropic_guarantee}), we have $g(x + \delta) = c_A$ whenever $\| \delta' \|_p = \| \delta \oslash \sigma(x) \|_p \leq R(\underline{p_A}, \overline{p_B})$.

Therefore, the smoothed classifier $g$ maintains its prediction $c_A$ within the certified region defined by the anisotropic noise parameters, establishing the soundness of our method.
\end{proof}

\subsection{Why Randomized Smoothing Depends on the Clean Input}

Randomized smoothing inherently depends on the clean input $x$ because the certification process aims to guarantee the classifier's robustness for that specific input. The certified radius $R(\underline{p_A}, \overline{p_B})$ is computed based on the class probabilities at $x$, which are estimated using noise added to $x$. Consequently, the smoothed classifier $g$ and the corresponding robustness guarantee are tied to the clean input.

In our method, the noise parameters $\sigma(x)$ and $\mu(x)$ are functions of $x$, further emphasizing this dependency. By designing the noise to be input-specific but independent of perturbations, we ensure that the certification process accurately reflects the classifier's behavior around the clean input, providing a meaningful and sound robustness guarantee.

\section{Experiments}
\label{sec: Experiments}

We present a comprehensive evaluation of UCAN in this section. Specifically, in Section \ref{sec: exp ani vs iso}, UCAN's performance with three anisotropic noise types is tested against isotropic noise baselines. Section \ref{sec:exp universality} assesses UCAN's universality regarding noise distributions and resistance to various $\ell_p$ perturbations. In Section \ref{sec:best}, we compare UCAN's top performance with state-of-the-art randomized smoothing methods. Section \ref{apd: visualization} and \ref{apd: efficiency} present the visualization and the efficiency. Additional experiments are detailed in Appendix \ref{apd:more exps}.

\vspace{+0.05in}
\noindent\textbf{Metrics}. 
Existing randomized smoothing methods often adopt the \emph{approximate certified test set accuracy} from \cite{cohen2019certified}, defined as the proportion of the test set correctly certified above a radius $R$. Besides, we also report certified accuracy based on the ALM, representing the fraction of the test set certified correctly with \emph{at least ALM}. Formally, for asymmetric RS, we define $Acc( \min \{\sigma\} R)=\frac{1}{N}{\sum_{j=1}^N \mathbf{1}_{[g'(x^j+\delta)=y^j]}},  \, \forall \  ||\delta||_p \le \min \{\sigma\} R$ and $Acc(ALM)=\frac{1}{N}{\sum_{j=1}^N \mathbf{1}_{[g'(x^j+\delta)=y^j]}},  \, \forall \  ||\delta||_p \le \sqrt[d]{\prod \sigma_i}R$. In isotropic RS, these metrics converge to $Acc(R)$.

To fairly position our methods, when compared to the SOTA methods, we present the certified accuracy w.r.t. the certified radius and optionally w.r.t. ALM.

\vspace{0.05in} 
\noindent\textbf{Experimental Settings}. All the experiments are performed on three datasets: MNIST \citep{lecun2010mnist}, CIFAR10, \citep{krizhevsky2009learning} and ImageNet \citep{ILSVRC15}. Following \cite{cohen2019certified}, we obtain the certified accuracy on the entire test set in CIFAR10 and MNIST while randomly picking $500$ samples in the test set of ImageNet; we set $\alpha=0.001$ and the numbers of Monte Carlo samples $n_0=100$ and $n=100,000$. We use the original size of the images in MNIST and CIFAR10, i.e., $28\times 28$ and $3\times 32 \times 32$, respectively. For the ImageNet dataset, we resize the images to $3\times 224 \times 224$. In the training, we train the base classifier and the parameter generator (if needed) with all the training set in three datasets. For the MNIST dataset, we use a simple two-layer CNN as the base classifier. For the CIFAR10 and ImageNet datasets, we use the ResNet110 and ResNet50 \cite{he2016deep} as the base classifier, respectively. Dataset-wise NPG uses a 5-layer MLP, and certification-wise NPG uses a 4-layer CNN, both trained with Adam optimizer (learning rate $1\times 10^{-2}$, batch size 128, 200 epochs). Noise parameters are constrained to be positive. 

\vspace{0.05in}
\noindent\textbf{Experimental Environment}. All the experiments were performed on the NSF Chameleon Cluster \cite{keahey2020lessons} with Intel(R) Xeon(R) Gold 6230 CPU @ 2.10GHz, 128G RAM, and Tesla V100 SXM2 32GB.

\subsection{Anisotropic vs Isotropic Noise in Randomized Smoothing}
\label{sec: exp ani vs iso}

\begin{figure}
\centering
\begin{subfigure}[b]{0.51\linewidth}
    \centering
    \includegraphics[width=\linewidth]{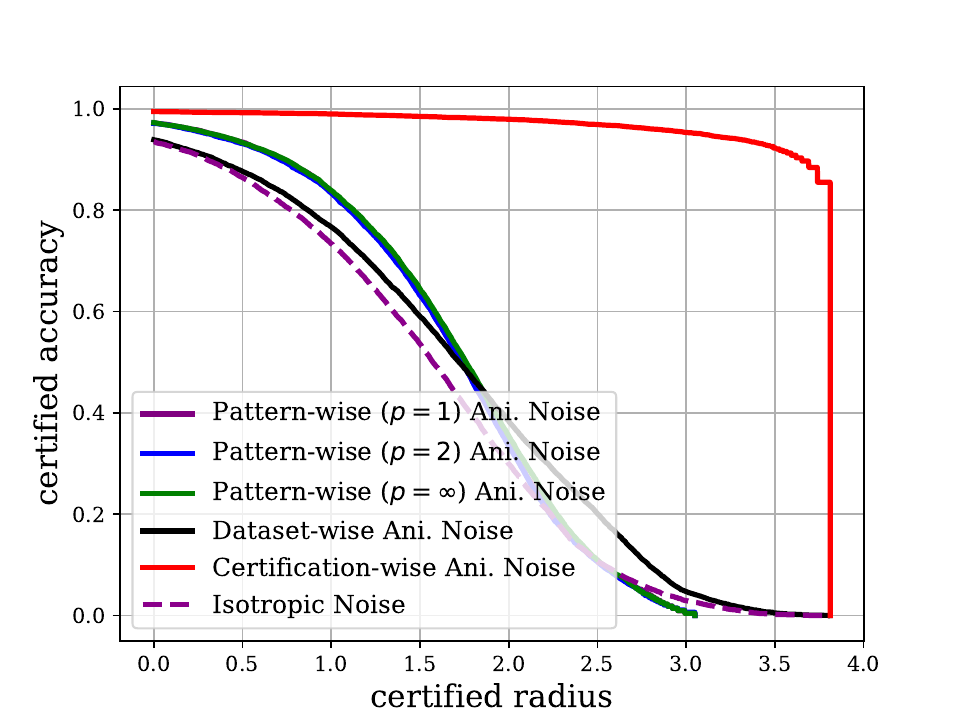}
    \caption{Acc vs. Radius (MNIST)}
    \label{fig:exp pre-assigned MNIST}
\end{subfigure}
\hspace{-0.2in} 
\begin{subfigure}[b]{0.51\linewidth}
    \centering
    \includegraphics[width=\linewidth]{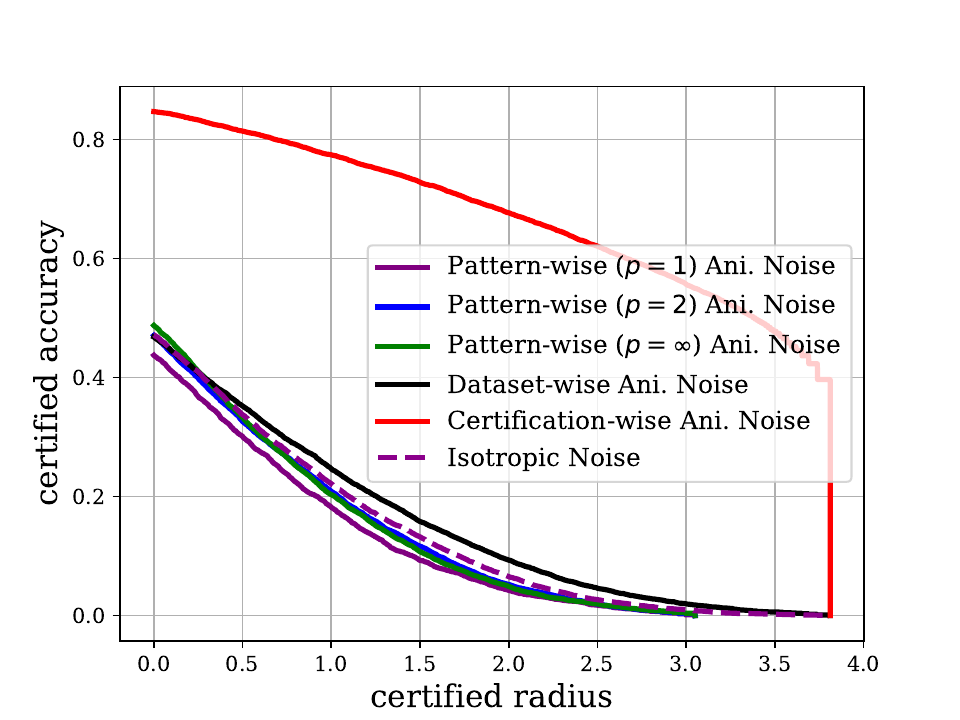}
    \caption{Acc vs. Radius (CIFAR10)}
    \label{fig:exp pre-assigned CIFAR10}
\end{subfigure}


\begin{subfigure}[b]{0.51\linewidth}
    \centering
    \includegraphics[width=\linewidth]{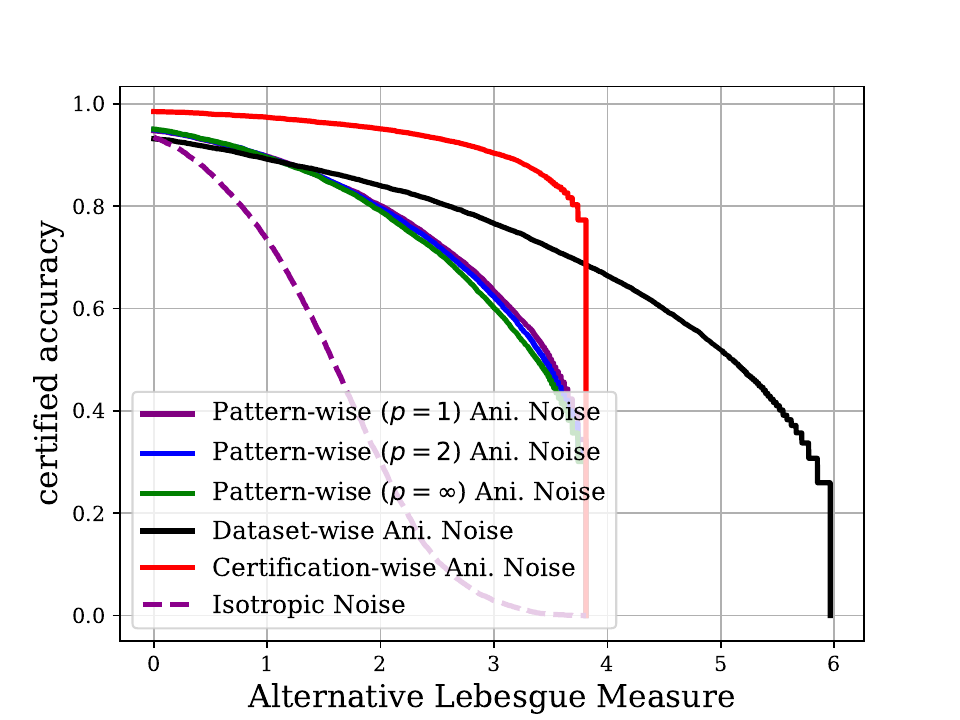}
    \caption{Acc vs. ALM (MNIST)}
    \label{fig:exp pre-assigned MNIST}
\end{subfigure}
\hspace{-0.2in} 
\begin{subfigure}[b]{0.51\linewidth}
    \centering
    \includegraphics[width=\linewidth]{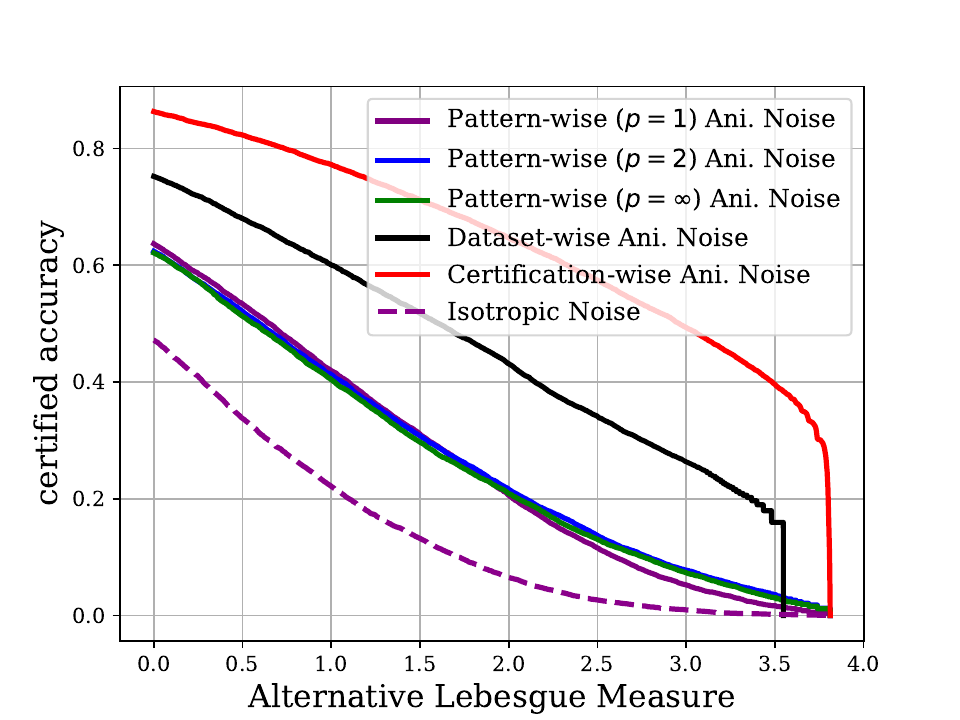}
    \caption{Acc vs. ALM (CIFAR10)}
    \label{fig:exp pre-assigned CIFAR10}
\end{subfigure}
\caption{RS with anisotropic noise vs. baseline \cite{cohen2019certified} with isotropic noise (Gaussian for $\ell_2$) -- UCAN gives significantly better certified accuracy and larger certified radius/ALM.}\vspace{-0.15in}
\label{fig:exp three methods}
\end{figure}


We first evaluate randomized smoothing with anisotropic noise generated by the three example NPGs. 
W.l.o.g., we adopt the most common setting as default: Gaussian distribution against $\ell_2$ perturbations, compare with the isotropic Gaussian baseline \cite{cohen2019certified}  (with zero-mean), which derives the tight certified radius (under multi-class setting) against $\ell_2$ perturbations. Other distributions against different $\ell_p$ perturbations (\emph{universality} of UCAN) are evaluated in Section \ref{sec:exp universality} and \ref{sec:best}. 


\vspace{0.05in}
\noindent\textbf{Parameter Setting}. For a fair comparison, we follow \cite{cohen2019certified} to set the variance $\lambda=1$ for isotropic Gaussian noise to benchmark with our methods. For our pattern-wise method, since the variance varies in different dimensions, we re-scale $\sigma(a,b)$ such that $\textit{mean} \{\sigma\} \approx 1.0$. 
For the dataset-wise noise, we empirically select the $\gamma$ to achieve the best trade-off on each dataset.  
For the certification-wise noise, the amplified factor $\gamma$ in the parameter generator is set as $1.0$ for all datasets.

\vspace{0.05in}
\noindent\textbf{Experimental Results}. The results on MNIST and CIFAR10 are presented in Figure \ref{fig:exp three methods}, and other results on ImageNet datasets are deferred to Appendix \ref{apd:more exps}. First, as shown in Figure \ref{fig:pre-assigned vis}, the certified accuracy of certification-wise anisotropic noise dominates all the noise customization methods across various settings. This indicates that optimizing anisotropic noise for each certification (of a specific input) can universally boost the performance w.r.t. the certified radius and the ALM since it achieves the best optimality on each certification (both the mean and variance can be optimized according to the input that will be certified). Second, dataset-wise anisotropic noise offers a modest improvement in certified accuracy w.r.t. the certified radius but significantly boosts certified accuracy w.r.t. the ALM. The reason is that the training of dataset-wise NPG can achieve a better trade-off between the prediction accuracy and the $\textit{mean} \{\sigma\}$ since NPG can learn to assign small variance to the key data dimensions to improve the prediction while maintaining the $\textit{mean} \{\sigma\}$ by increasing the variance in other data dimensions. However, it is hard to improve the trade-off between the prediction accuracy and the $\min\{\sigma\}$ since decreasing $\min\{\sigma\}$ drops the certified radius while improving the prediction (see some examples for the dataset-wise anisotropic noise in Appendix \ref{apd: visualization}). Similarly, the pattern-wise anisotropic noise only improves the certified accuracy w.r.t. certified radius slightly (even reduces the performance on CIFAR10), but we also observe a better trade-off between the certified accuracy w.r.t. ALM. Finally, we also observe that the dataset-wise anisotropic noise can significantly improve the ALM on MNIST (as much as $ALM=6$). These improvements stem from our method's ability to optimize noise parameters at different levels of optimality. Unlike fixed noise patterns, certification-wise anisotropic noise adapts both mean and variance to each input's characteristics, leading to better prediction accuracy while
  maintaining robustness guarantees. The significant ALM improvements (up to 6x on MNIST) result from our optimization targeting the overall certified volume rather than just the worst-case radius.

\subsection{Universality (Noise Distributions for $\ell_p$ Perturbations)}
\label{sec:exp universality}


\begin{figure}
\centering
\begin{subfigure}[b]{0.51\linewidth}
    \centering
    \includegraphics[width=\linewidth]{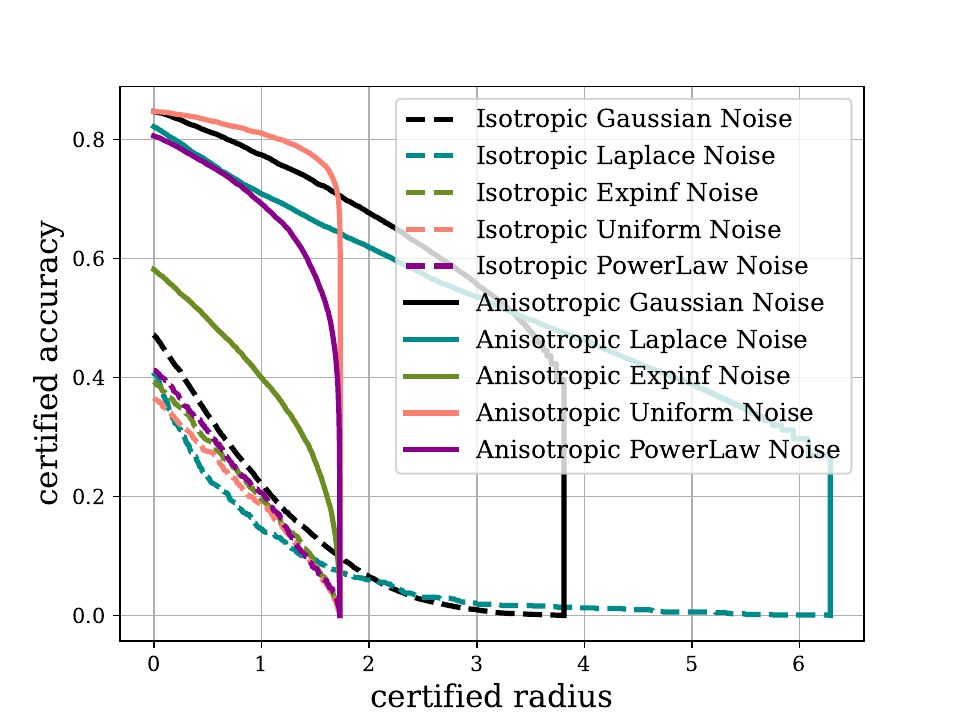}
    \caption{\scriptsize $\ell_1$ perturbation (Radius)}
    \label{fig:exp universality personalized l1}
\end{subfigure}
\hspace{-0.2in}
\begin{subfigure}[b]{0.51\linewidth}
    \centering
    \includegraphics[width=\linewidth]{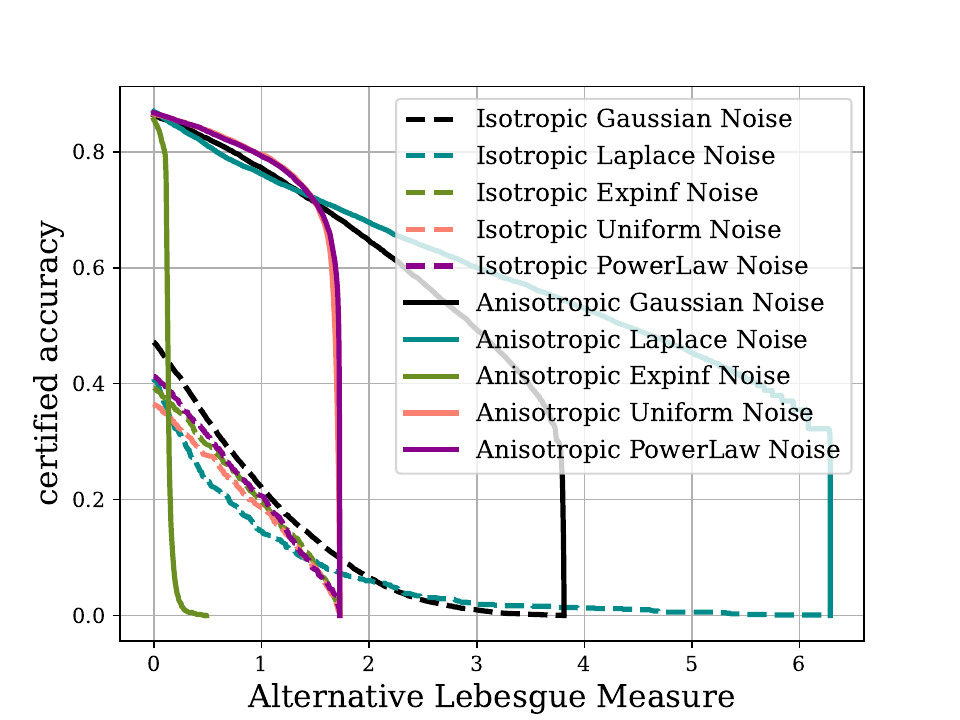}
    \caption{\scriptsize $\ell_1$ perturbation (ALM)}
    \label{fig:exp universality personalized l1}
\end{subfigure}

\begin{subfigure}[b]{0.51\linewidth}
    \centering
    \includegraphics[width=\linewidth]{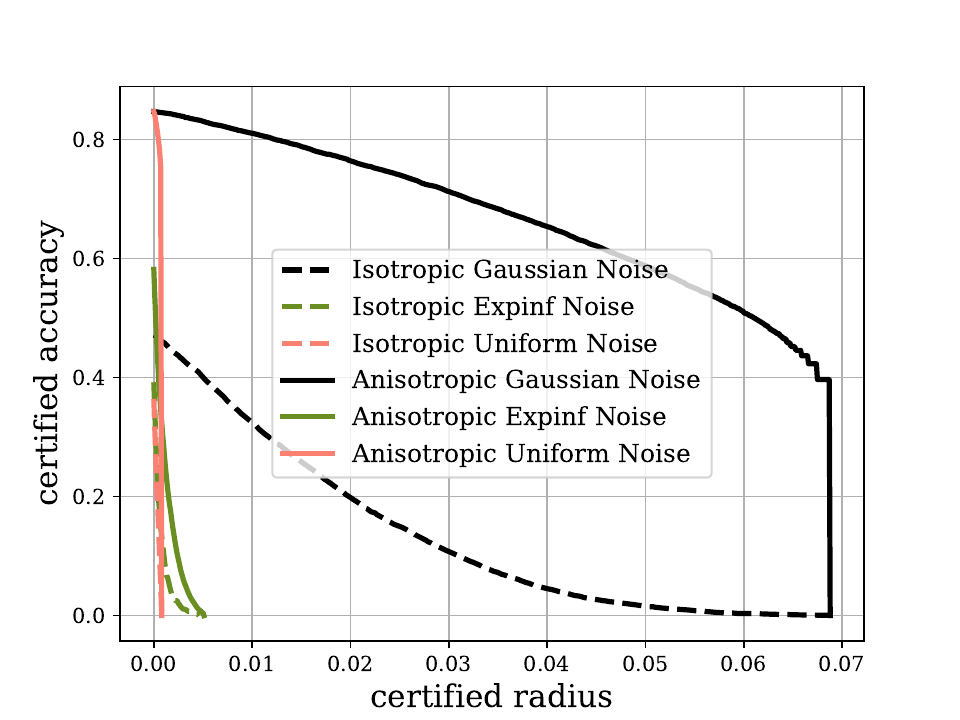}
    \caption{\scriptsize $\ell_\infty$ perturbation (Radius)}
    \label{fig:exp universality personalized linf}
\end{subfigure}
\hspace{-0.2in}
\begin{subfigure}[b]{0.51\linewidth}
    \centering
    \includegraphics[width=\linewidth]{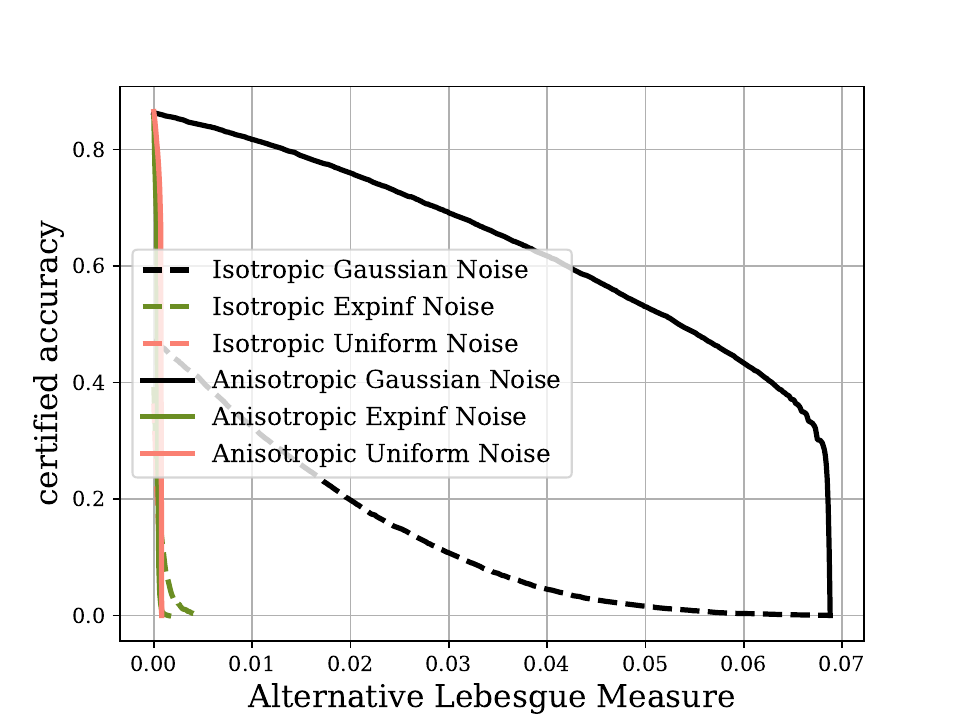}
    \caption{\scriptsize $\ell_\infty$ perturbation (ALM)}
    \label{fig:exp universality personalized linf}
\end{subfigure}

\caption{UCAN (certification-wise anisotropic) vs. isotropic randomized smoothing: different noise PDFs against various $\ell_p$ perturbations on CIFAR10.}\vspace{-0.15in}

\label{fig:exp universality}
\end{figure}

In this section, we evaluate the universality of UCAN with the certification-wise anisotropic noise over different noise distributions against different $\ell_p$ perturbations. 
Specifically, we evaluate the randomized smoothing methods with noise listed in Table \ref{tab:transform}, 
and follow \cite{yang2020randomized} to set the scalar parameter $\lambda$ of different noise distributions with variance $1$. For the anisotropic noise, we follow the aforementioned parameter settings for pattern-wise, dataset-wise, and certification-wise anisotropic noise. We present the results against $\ell_1$ and $\ell_\infty$ perturbations due to the lack of existing RS theories for non-Gaussian noise against $\ell_2$ perturbations. 

In Figure \ref{fig:exp universality}, for all settings, UCAN can universally amplify the certified robustness of isotropic RS. It also shows that the anisotropic Laplace noise and the anisotropic Gaussian noise achieve the best trade-offs between certified accuracy and certified radius/ALM against $\ell_1$ perturbation and $\ell_\infty$ perturbation, respectively. 
This universal improvement across different $\ell_p$ norms and noise distributions validates our linear transformation theory, which can seamlessly convert any isotropic randomized smoothing method to its anisotropic counterpart. The consistent performance gains demonstrate that our approach captures fundamental properties of optimal noise distribution regardless of $\ell_p$ norms and noise types.
More experiments with various NPG can be found in Appendix \ref{apd:more exps}.

\subsection{Best Performance Comparison vs. SOTA Methods}
\label{sec:best}

We also compare our best performance (certification with certification-wise anisotropic noise) with the best performance of 15 SOTA methods. Here we present the certified accuracy w.r.t. both \emph{ALM} and \emph{$\ell_p$ radius}. \emph{Note that the ALM and radius are equivalent for SOTA methods with isotropic noise.}

Following the same settings in such existing randomized smoothing methods \cite{cohen2019certified,alfarra2020data, sukenik2021intriguing, wang2020pretrain}, we focus on the Gaussian noise against $\ell_2$ perturbations to benchmark with them. 
Results are shown in Table \ref{tab: comparison MNIST}, \ref{tab: comparison CIFAR10}, and \ref{tab: comparison ImageNet}. We also present the improvement of our method over the best baseline in percentage.

On all three datasets, UCAN significantly boosts the certified accuracy. For instance, it achieves the improvement of $142.5\%$, $182.6\%$, and $121.1\%$ over the best baseline on MNIST, CIFAR10, and ImageNet, respectively. UCAN also achieves the best trade-off between certified accuracy and radius/ALM (\emph{two complementary metrics}): 1) UCAN presents both larger radius/ALM and higher certified accuracy in general, and 2) On large radius/ALM, UCAN can still achieve high certified accuracy. We also observe that our certified accuracy is smaller than the SOTA performances at some low radius/ALM on MNIST and ImageNet, this is because of the lower training performance of the classifier. 
The substantial improvements (up to 182.6\%) at larger radii demonstrate our method's strength in maintaining high certified accuracy where traditional methods degrade rapidly. This advantage stems from the combination of the linear transformation approach and the certification-wise noise, which preserves the statistical properties of the original
  smoothing while enabling dimension-specific noise adaptation. The larger the certified region required, the more pronounced our method's benefits become, as anisotropic noise can better accommodate the heterogeneity across input dimensions.

\vspace{0.05in}

\noindent \textbf{Beyond $\ell_2$ Norm}. We also compare our certification-wise method's performance against $\ell_1$ and $\ell_{\inf}$ perturbation with existing baselines \footnote{Due to the limited number of prior studies on the $\ell_1$ and $\ell_\infty$ norms, we compare our method with two available baselines for each case.} (see Table~\ref{tab: comparison l1linf}).
Across both norms, UCAN consistently improves certified accuracy (CA) over isotropic RS. For $\ell_1$, anisotropic Gaussian yields the highest CA at small–medium radii (e.g., 72\% at $R{=}1.5$ vs.\ 55\% in \cite{yang2020randomized}), while anisotropic Laplace dominates at larger radii (e.g., 61\% at $R{=}2.0$ vs.\ $\le\!25\%$ in baselines), whereas anisotropic Uniform peaks early then collapses (0\% beyond $R{=}2.0$). For $\ell_\infty$, anisotropic Gaussian is uniformly best (e.g., 73\% at $6/255$ vs.\ 31–38\% in \cite{cohen2019certified,zhang2020black}), maintaining sizable margins up to $16/255$. These trends confirm our linear-transformation theory’s universality and show that choosing the noise family to match the threat norm (Laplace for $\ell_1$, Gaussian for $\ell_\infty$) and employing certification-wise NPGs yields the highest practical ``optimality'' under fixed budgets.

\begin{table}[!h]
    \centering
    \caption{Certified accuracy vs. $\ell_1$ and $\ell_{\inf}$ perturbations (CIFAR10). \colorbox{green!60}{Best} and \colorbox{red!40}{$\geq$ SOTA}}
     \resizebox{\linewidth}{!}{
    \begin{tabular}{ l  c c c c c c c c c c}
    \hline
    \hline
    Radius ($\ell_1$ norm)  &  0.50  &1.00 &1.50  & 2.00 & 2.5 & 3.0 & 3.5 & 4.0\\
    \hline 
       Teng et al.'s \cite{teng2020ell} & 61\% & 39\% & 24\% & 16\% & 11\% & 7\% & 4\% & 3\%  \\
      Yang et al.'s \cite{yang2020randomized}  & 74\% & 62\% & 55\% & 48\% & 43\% & 40\% & 37\% & 33\% \\
    \hline
    Ours (Ani. Uniform) & \cellcolor{green!60}{\textbf{84\%}}& \cellcolor{green!60}{\textbf{82\%}} &\cellcolor{green!60}{\textbf{76\%}} & 0\% & 0\% & 0\% & 0\% & 0\% \\
    Ours (Ani. Gaussian) & \cellcolor{red!40}{\textbf{81\%}}& \cellcolor{red!40}{\textbf{77\%}} &\cellcolor{red!40}{\textbf{72\%}} &\cellcolor{green!60}{\textbf{66\%}} &\cellcolor{green!60}{\textbf{61\%}} &\cellcolor{green!60}{\textbf{56\%}} &\cellcolor{red!40}{\textbf{47\%}} &0\% \\
    Ours (Ani. Laplace) & \cellcolor{red!40}{\textbf{75\%}}& \cellcolor{red!40}{\textbf{70\%}} &\cellcolor{red!40}{\textbf{66\%}} &\cellcolor{red!40}{\textbf{61\%}} &\cellcolor{red!40}{\textbf{57\%}} &\cellcolor{red!40}{\textbf{52\%}} &\cellcolor{green!60}{\textbf{50\%}} &\cellcolor{green!60}{\textbf{46\%}}  \\
    \hline
    \hline
        Radius ($\ell_{\inf}$ norm)  &  2/255  & 4/255 & 6/255  & 8/255 & 10/255 & 12/255 & 14/255 & 16/255\\
    \hline 
       Cohen et al.'s \cite{cohen2019certified} & 58\% & 42\% & 31\% & 25\% & 18\% & 13\% & -- & -- \\
      Zhang et al.'s \cite{zhang2020black}  & 60\% & 47\% & 38\% & 32\% & 23\% & 17\% & -- & -- \\
    \hline
    Ours (Ani. Gaussian) & \cellcolor{green!60}{\textbf{81\%}}& \cellcolor{green!60}{\textbf{78\%}} &\cellcolor{green!60}{\textbf{73\%}} &\cellcolor{green!70}{\textbf{70\%}} &\cellcolor{green!70}{\textbf{65\%}} &\cellcolor{green!60}{\textbf{60\%}} &\cellcolor{green!60}{\textbf{54\%}} &\cellcolor{green!60}{\textbf{48\%}}  \\
    \hline
    \hline
    \end{tabular}}
        \label{tab: comparison l1linf}
\end{table}

\begin{table*}[!th]
    \centering
    \caption{Certified accuracy vs. $\ell_2$ perturbations (MNIST). \colorbox{green!60}{Best} and \colorbox{red!40}{$\geq$ SOTA}}
    {\small
     \resizebox{0.95\linewidth}{!}{\begin{tabular}{ l  c c c c c c c c c c}
    \hline
    \hline
    Radius and ALM (equivalent for isotropic) & 0.0 & 0.25 & 0.50 & 0.75 &1.00 & 1.25 &1.50 & 1.75 & 2.00 & 2.25 \\
    \hline
       Cohen's \cite{cohen2019certified} & 83\% & 61\% & 43\% & 32\% & 22\% & 17\% & 14\% & 9\% & 7\% & 4\%  \\
      Sample-wise \cite{wang2020pretrain}  & 98\% & 97\% & 96\% & 93\% & 88\% & 81\% & 73\% & 57\% & 41\% & 25\% \\
      Input-depend \cite{sukenik2021intriguing}  & 99\% & 98\% & 97\% & 94\% & 88\% & 79\% & 58\% & 27\% & 0\% & 0\% \\
        MACER \cite{zhai2020macer} & 99\% & 99\% &96\% &95\% &90\% &83\% &73\% &50\% &36\% &28\% \\
        SmoothMix \cite{jeong2021smoothmix} & 99\% & 99\% &98\% &97\% &93\% &89\% &82\% &71\% &45\% &37\% \\
        DRT \cite{yang2021certified} & 99\% &98\% & 98\% &97\% &93\% &89\% &83\% &70\% &48\% &40\% \\
        \hline
        Ours (certified accuracy w.r.t. ALM) &98\% &98\% &\cellcolor{red!40}98\%   &\cellcolor{red!40}98\% &\cellcolor{red!40}97\% &\cellcolor{red!40}97\% &\cellcolor{red!40}96\%  &\cellcolor{red!40}96\% &\cellcolor{red!40}95\% &\cellcolor{red!40}94\%\\
        Improvement over the Best Baseline ($\%$) & -1.0\% & -1.0\% & +0.0\% &+1.0\% &+4.3\% &+9.0\% &+15.7\% &+35.2\% &+98.0\% &+135.0\% \\
        \hline
        Ours (certified accuracy w.r.t. radius) & \cellcolor{green!60}{\textbf{99\%}}& \cellcolor{green!60}{\textbf{99\%}} &\cellcolor{green!60}{\textbf{99\%}} &\cellcolor{green!60}{\textbf{99\%}} &\cellcolor{green!60}{\textbf{99\%}} &\cellcolor{green!60}{\textbf{99\%}} &\cellcolor{green!60}{\textbf{98\%}} &\cellcolor{green!60}{\textbf{98\%}} &\cellcolor{green!60}{\textbf{98\%}} &\cellcolor{green!60}{\textbf{97\%}} \\
        Improvement over the Best Baseline ($\%$) & +0.0\% & +0.0\% & +1.0\% &+2.1\% &+6.5\% &+11.2\% &+18.1\% &+38.0\% &+104.2\% &+142.5\% \\
    \hline
    \hline
    \end{tabular}}
    }
        \label{tab: comparison MNIST}
\end{table*}

\begin{table*}[!th]
    \centering
    \caption{Certified accuracy vs. $\ell_2$ perturbations (CIFAR10). \colorbox{green!60}{Best} and \colorbox{red!40}{$\geq$ SOTA}}
    {\small
     \resizebox{0.95\linewidth}{!}{\begin{tabular}{ l  c c c c c c c c c c}
    \hline
    \hline
    Radius and ALM (equivalent for isotropic) & 0.0 & 0.25 & 0.50 & 0.75 &1.00 & 1.25 &1.50 & 1.75 & 2.00 & 2.25 \\
    \hline
        Cohen's \cite{cohen2019certified} & 83\% & 61\% & 43\% & 32\% & 22\% & 17\% & 14\% & 9\% & 7\% & 4\%  \\
        SmoothAdv\cite{salman2019provably} & -- &81\% & 63\% & 52\% & 37\% & 33\% & 29\% & 25\% & 18\% & 16\% \\
        MACER \cite{zhai2020macer} & 81\% & 71\% & 59\% & 47\% & 39\% & 33\% & 29\% & 23\% & 19\% & 17\% \\
        Consistency \cite{jeong2020consistency} &78 \% &69\% & 58\% & 49\% & 38\% & 34\% & 30\% & 25\% & 20\% & 17\% \\
        SmooothMix \cite{jeong2021smoothmix} &77\% & 68\% & 58\% & 48\% & 37\% & 32\% & 26\% & 20\% & 17\% & 15\% \\
        Boosting \cite{horvath2021boosting}& 83\% & 71\% & 60\% & 52\% & 39\% & 34\% & 30\% & 25\% & 20\% & 17\% \\
        DRT \cite{yang2021certified} & 73 \% & 67\% &60\% &51\% &40\% &36\% &30\% &24\% &20\% & --\\
        Black-box \cite{zhang2020black}  & -- & 61\% & 46\% & 37\% & 25\% & 19\% & 16\% & 14\% &11\% & 9\%  \\
        Data-depend \cite{alfarra2020data}  & 82\% & 68\% & 53\% & 44\% & 32\% & 21\% & 14\% & 8\% & 4\% & 1\% \\
      Sample-wise \cite{wang2020pretrain}  & 84\% & 74\% & 61\% & 52\% & 45\% & 41\% &36\% & 32\% & 27\% & 23\%\\
      Input-depend \cite{sukenik2021intriguing}  & 83\% & 62\% & 43\% & 27\% & 18\% & 11\% & 5\% & 2\% & 0\% & 0\% \\
      Denoise 1 \cite{carlini2022certified} &80\% &70\% &55\% &48\% &37\% &32\% &29\% &25\% &15\% &14\% \\
      Denoise 2 \cite{zhang2023diffsmooth} &85\% &76\% &66\% &57\% &44\% &37\% &31\% &25\% &22\% &20\%\\
      ANCER \cite{eiras2021ancer} & 84\% &80\% & 67\%  & -- &34\% & -- & 15\% & -- &11\% & --\\
    RANCER \cite{rumezhak2023rancer}  & --& 81\% & 48\% &  28\% & 11\% & 1\% & -- & -- & -- &--  \\
        \hline
        Ours (certified accuracy w.r.t. ALM) &\cellcolor{green!60}{\textbf{86\%}} &\cellcolor{green!60}{\textbf{84\%}} & \cellcolor{green!60}{\textbf{82\%}} & \cellcolor{green!60}{\textbf{80\%}} & \cellcolor{red!40}74\% & \cellcolor{red!40}71\% & \cellcolor{red!40}68\% & \cellcolor{red!40}65\% & \cellcolor{red!40}61\% & \cellcolor{red!40}57\% \\
        Improvement over the Best Baseline ($\%$)& +1.2\% & +3.7\% &+6.5\% & +40.4\% &+39.6\% & +73.2\% &+88.9\% &+103.1\% &+125.9\% &+147.8\% \\
        \hline
        Ours (certified accuracy w.r.t. radius) &\cellcolor{red!40}85\% &\cellcolor{red!40}{83\%} &\cellcolor{red!40}81\% &\cellcolor{red!40}80\% &\cellcolor{green!60}{\textbf{77\%}} &\cellcolor{green!60}{\textbf{75\%}} &\cellcolor{green!60}{\textbf{73\%}} &\cellcolor{green!60}{\textbf{70\%}} &\cellcolor{green!60}{\textbf{68\%}} &\cellcolor{green!60}{\textbf{65\%}} \\
        Improvement over the Best Baseline ($\%$)& +0.0\% & +2.5\% &+5.2\% & +40.4\% &+45.3\% & +82.9\% &+102.8\% &+118.8\% &+151.9\% &+182.6\% \\
    \hline
    \hline
    \end{tabular}}
    }
      \label{tab: comparison CIFAR10}
\end{table*}

\begin{table*}[!th]\scriptsize
    \centering
    \caption{Certified accuracy vs. $\ell_2$ perturbations (ImageNet). \colorbox{green!60}{Best} and \colorbox{red!40}{$\geq$ SOTA}}
    {\scriptsize
    \resizebox{\linewidth}{!}{
    \begin{tabular}{ l ccc c c c c c }
    \hline
    \hline
    Radius and ALM (equivalent for isotropic)    & 0.00 & 0.50 & 1.00 & 1.50 & 2.00 & 2.50 & 3.00 & 3.50\\
    \hline
        Cohen's \cite{cohen2019certified} & 67\% & 49\% & 37\% & 28\% & 19\% &15\% & 12\% & 9\% \\
        SmoothAdv \cite{salman2019provably} & 67\% & 56\%  & 45\% & 38\% & 28\% & 26\% & 20\% & 17\% \\
        MACER \cite{zhai2020macer} & 68\% & 57\% & 43\% & 37\% & 27\% & 25\% & 20\% & --\\
        Consistency \cite{jeong2020consistency} & 57\% & 50\% & 44\% & 34\% & 24\% & 21\% & 17\% & --\\
        SmoothMix \cite{jeong2021smoothmix} & 55\% & 50\% & 43\% & 38\% & 26\% & 24\% & 20\% & --\\
        Boosting \cite{horvath2021boosting} & 68\% & 57\% & 45\% & 38\% & 29\% & 25\% & 21\% & 19\% \\
        DRT\cite{yang2021certified} & 50\% & 47\% &44\% &39\% &30\% &29\% &23\% & --\\
        Black-box \cite{zhang2020black} & -- & 50\% & 39\% & 31\% & 21\% & 17\% & 13\% & 10\% \\
        Data-depend \cite{alfarra2020data} & 62\% & 59\% & 48\% &43\% &31\% &25\% &22\% &19\%  \\
        Denoise 1  \cite{carlini2022certified}  &48\% &41\% &30\% &24\% &19\% &16\% &13\% &-- \\
        Denoise 2 \cite{zhang2023diffsmooth} &66\% &59\% &48\% &40\% &31\% &25\% &22\% &--\\
        ANCER \cite{eiras2021ancer} & 66\% & 66\%& 62\% & 58\% & 44\% &37\% &32\% & --\\
        \hline
        Ours (certified accuracy w.r.t. ALM)   & \cellcolor{green!60}{\textbf{71\%}} & \cellcolor{green!60}{\textbf{66\%}} & \cellcolor{green!60}{\textbf{62\%}} & \cellcolor{green!60}{\textbf{58\%}} & \cellcolor{green!60}{\textbf{54\%}} & \cellcolor{green!60}{\textbf{51\%}} & \cellcolor{green!60}{\textbf{47\%}} &\cellcolor{green!60}{\textbf{42\%}}\\
        Improvement over the Best Baseline ($\%$) & +4.4\% & +0.0\% &+0.0\% &+0.0\% &+22.7\% &+37.8\% & +46.9\% &+121.1\%\\
        \hline
        Ours (certified accuracy w.r.t. radius) &  65\% &62\% &58\% &53\% &\cellcolor{red!40}50\% &\cellcolor{red!40}46\% &\cellcolor{red!40}43\% &\cellcolor{red!40}38\%\\
        Improvement over the Best Baseline ($\%$) & -4.4\% & -6.1\% &-6.5\% &-8.6\% &+13.6\% &+24.3\% & +34.4\% &+100.0\%\\
        \hline
        \hline
        \end{tabular}}
    }
    \label{tab: comparison ImageNet}
\end{table*}

\subsection{Visualization}
\label{apd: visualization}

\begin{figure}[!t]
\centering
\begin{subfigure}[b]{0.45\linewidth}
    \centering
    \includegraphics[width=0.99\linewidth]{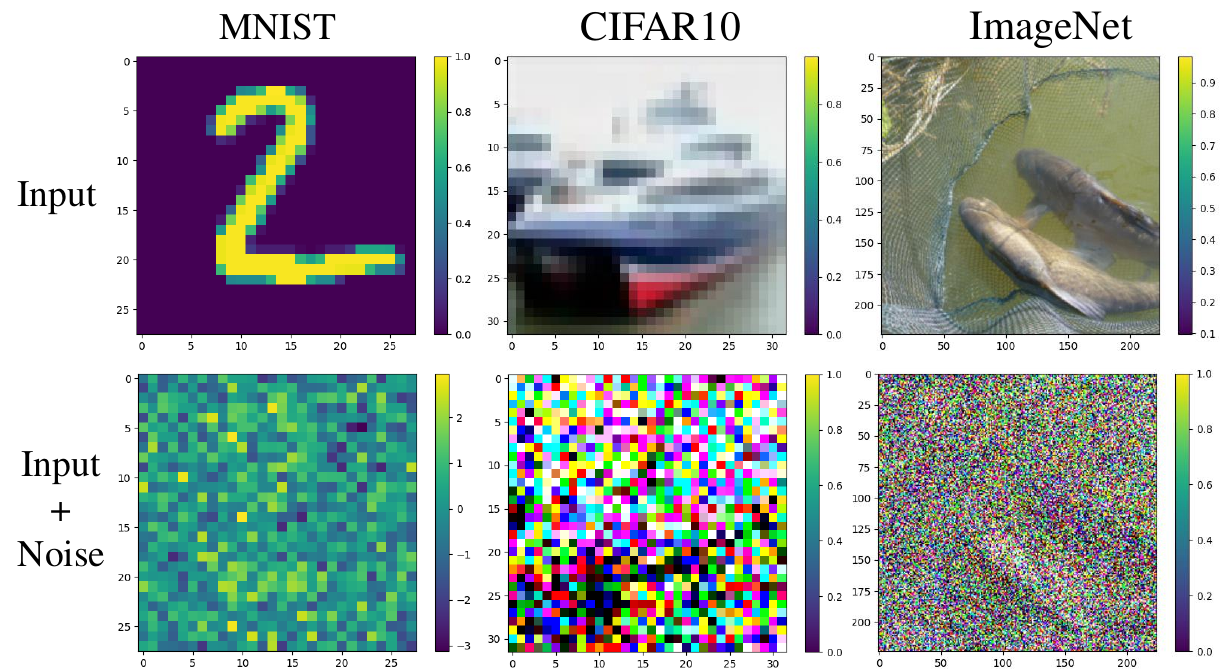}
    \caption{Example of Isotropic Noise}
    \label{fig: visual iso}
\end{subfigure}
\hspace{0.04\linewidth} 
\begin{subfigure}[b]{0.45\linewidth}
    \centering
    \includegraphics[width=0.99\linewidth]{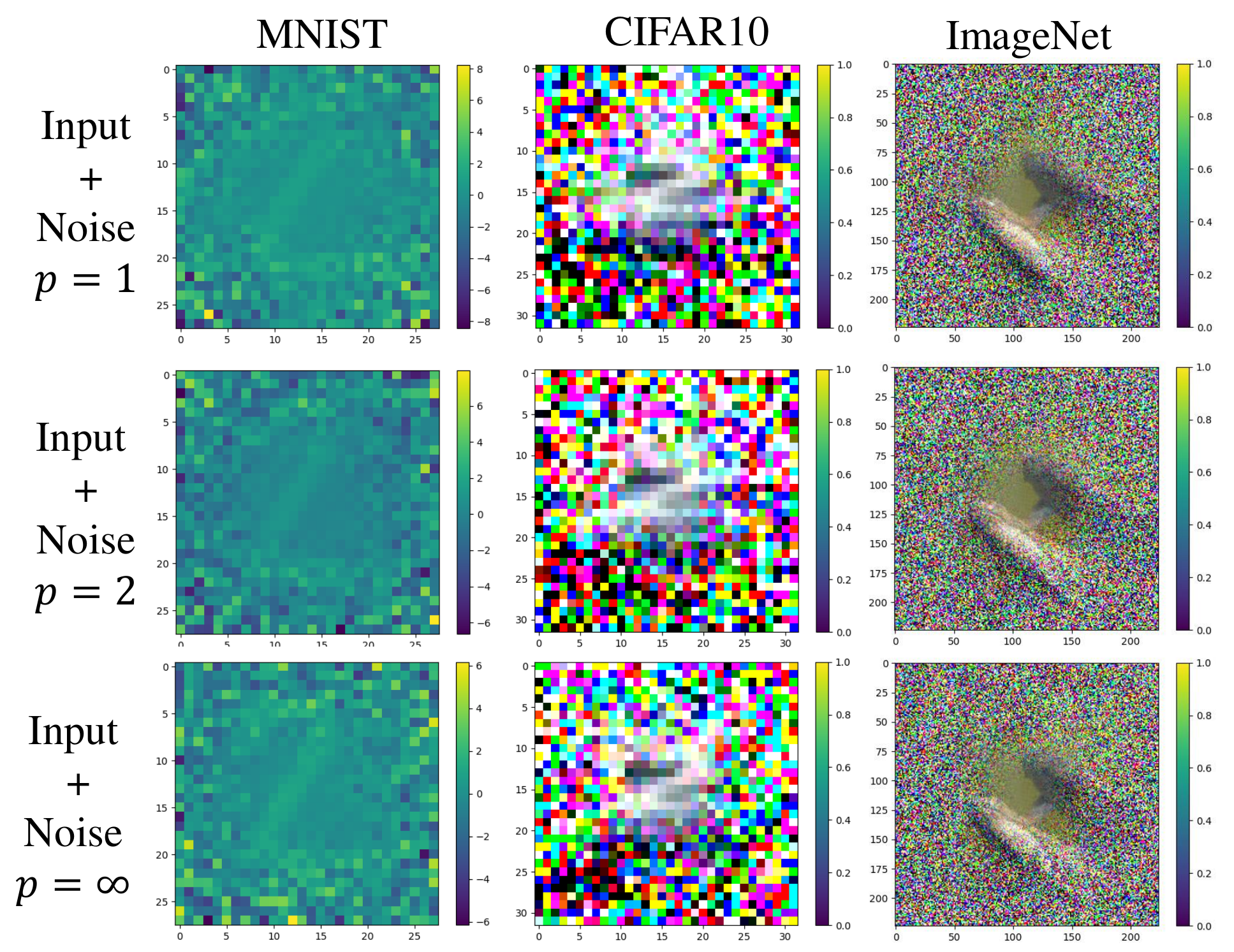}
    \caption{Examples of pattern-wise anisotropic noise}
    \label{fig: visual preassigned}
\end{subfigure}

\vspace{0.05in} 

\begin{subfigure}[b]{0.45\linewidth}
    \centering
    \includegraphics[width=0.99\linewidth]{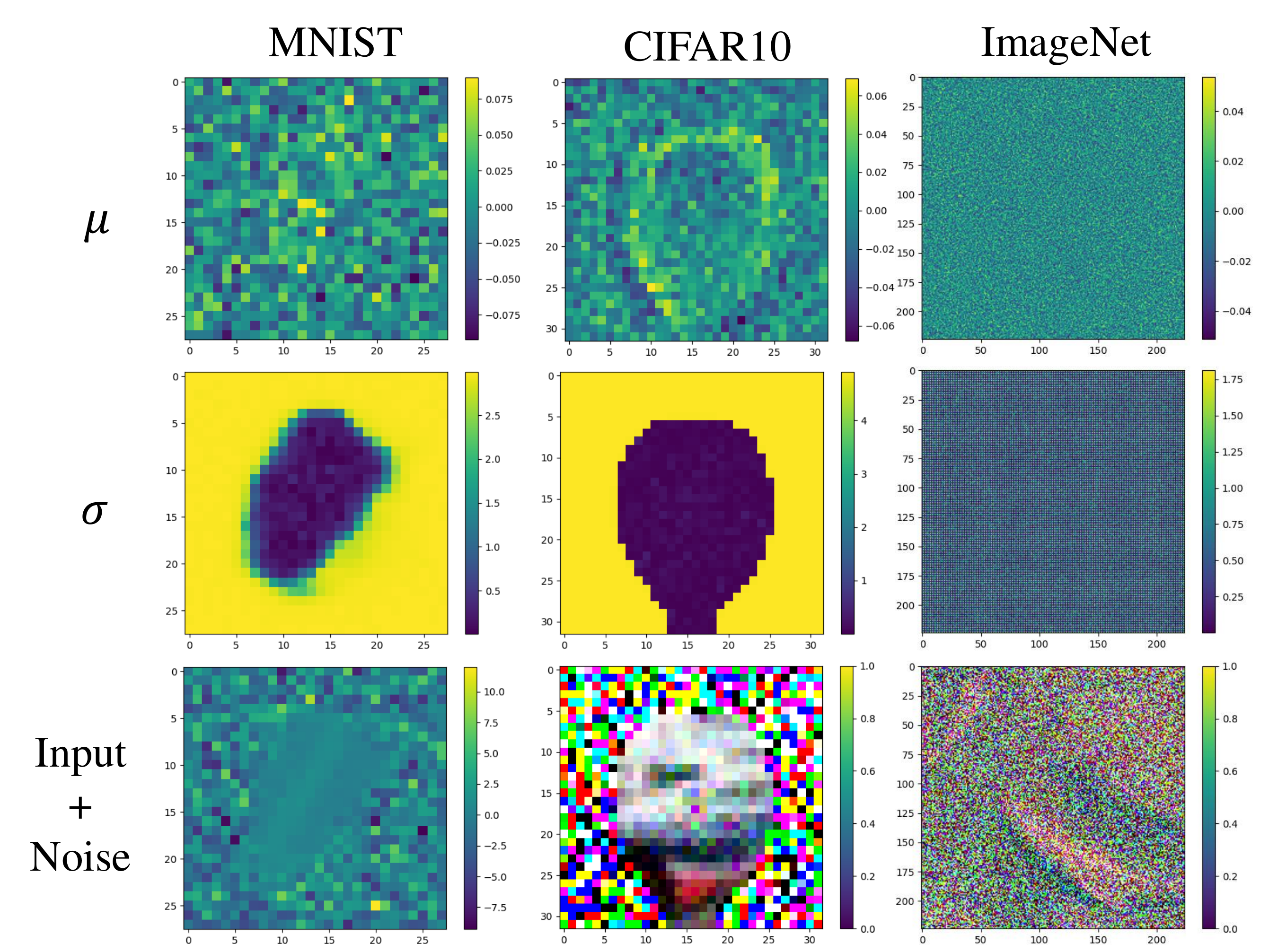}
    \caption{Examples of dataset-wise anisotropic noise}
    \label{fig: visual universal}
\end{subfigure}
\hspace{0.04\linewidth} 
\begin{subfigure}[b]{0.45\linewidth}
    \centering
    \includegraphics[width=0.99\linewidth]{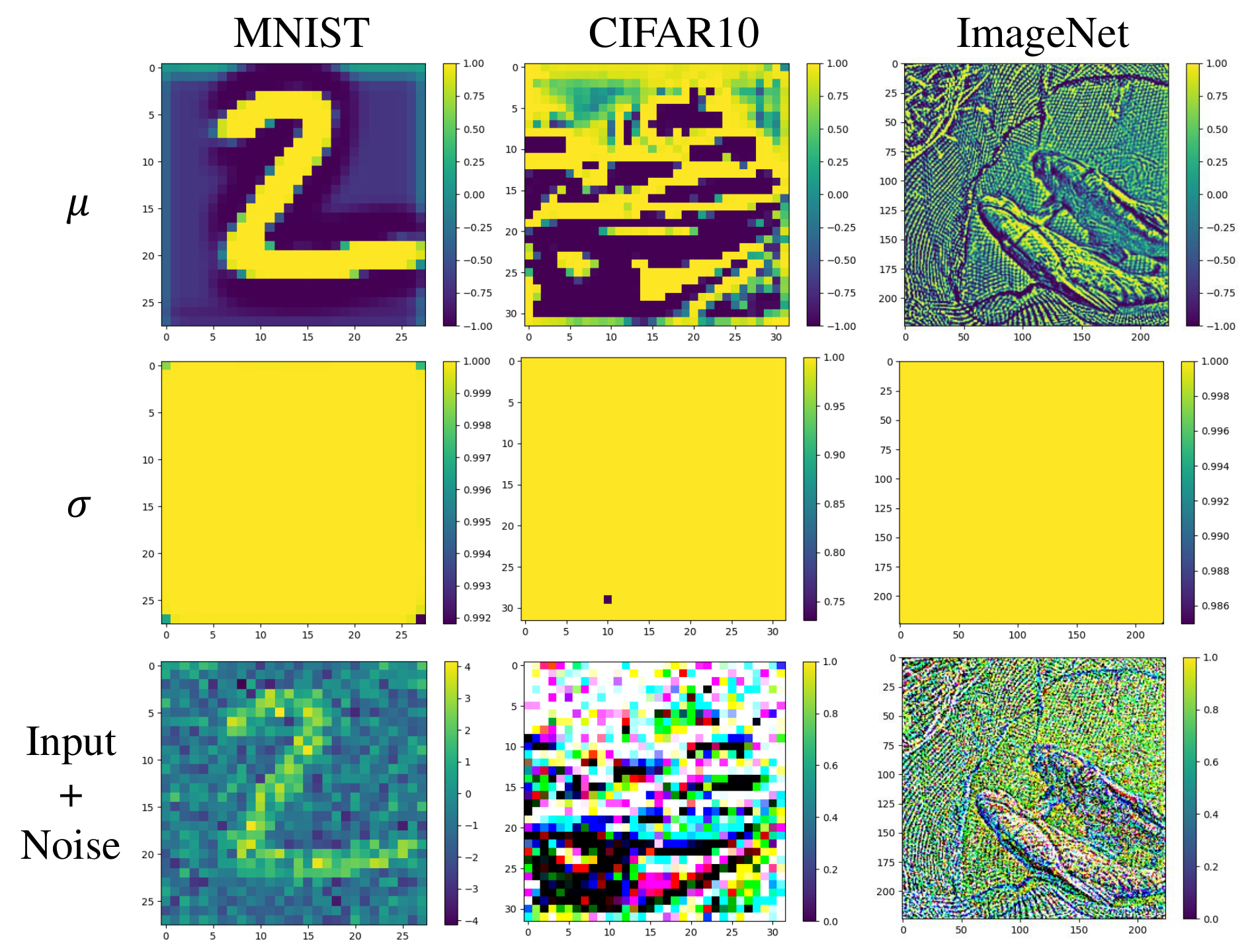}
    \caption{Examples of certification-wise anisotropic noise}
    \label{fig: visual personalized}
\end{subfigure}
\caption{Visualization of anisotropic vs. isotropic noise, based on the input in (a).}\vspace{-0.2in}
\label{fig:visualization}
\end{figure}

We present several examples of anisotropic (generated with ALM loss) and isotropic noise in Figure \ref{fig:visualization}. All the proposed anisotropic noise generation methods find better spatial distribution to generate the anisotropic noise. Both the pattern-wise and dataset-wise anisotropic reduce the variance on the key area, except the dataset-wise anisotropic noise on ImageNet (it seems the parameter generator does not find a constant key area on ImageNet due to the complicated data distribution in ImageNet). We also observe that the parameter generator for certification-wise anisotropic noise generates large mean offsets to compensate for the high $\sigma$ values. It turns out that with high variance ($\sigma_i \approx 1$), the object is still recognizable in the certification-wise anisotropic noise.

\subsection{Efficiency}
\label{apd: efficiency}
UCAN is a universal framework that can be readily integrated into existing randomized smoothing to boost performance. Whether the extra neural network components (parameter generator) in UCAN will degrade the efficiency of existing randomized smoothing is an important question. We show that the running time overhead resulting from the parameter generator is negligible compared to the running time of the certification, since for each input, the classifier needs to evaluate $N$ noise samples while $\mu$ and $\sigma$ are generated once. Typically, $N=100,000$. UCAN can be trained offline and tested online to boost the performance of randomized smoothing. We evaluate the online certification running time for certification-wise anisotropic noise generation and traditional randomized smoothing \cite{cohen2019certified} on ImageNet with four Tesla V100 GPUs and $2,000$ batch size, the average runtimes over $500$ samples are $27.43$s and $27.09$s per sample for our method and \cite{cohen2019certified}'s method, respectively. Thus, the NPG will only slightly increase the overall runtime. Also, Training a certification-wise NPG requires 200 epochs, taking 31.67 minutes on an H100 GPU.



\section{Related Work}
\label{sec: RelatedWork}

In this section, we review the related works for certified defenses against evasion attacks on machine learning models.

\vspace{0.05in}

\noindent\textbf{Certified Defenses}. Certified defenses guarantee robustness against adversarial perturbations within a specified boundary (e.g., $\ell_1$, $\ell_2$, or $\ell_\infty$ ball of radius $R$). Existing approaches fall into two categories: exact and conservative certified defenses.
Exact methods leverage satisfiability modulo theories~\cite{katz2017reluplex,carlini2017provably,ehlers2017formal,huang2017safety} or mixed-integer linear programming~\cite{cheng2017maximum,lomuscio2017approach,fischetti2018deep,bunel2018unified} to precisely determine the existence of adversarial examples within $R$, but are not scalable to large networks.
Conservative methods use global/local Lipschitz constants~\cite{gouk2021regularisation,tsuzuku2018lipschitz,anil2019sorting,DBLP:conf/icml/CisseBGDU17,DBLP:conf/nips/HeinA17}, optimization-based certification~\cite{wong2018provable,wong2018scaling,raghunathan2018certified,dvijotham2018dual}, or layer-by-layer approaches~\cite{mirman2018differentiable,singh2018fast,DBLP:journals/corr/abs-1810-12715,DBLP:conf/icml/WengZCSHDBD18,DBLP:conf/nips/ZhangWCHD18}, which scale better but typically yield conservative or architecture-specific guarantees.
Neither can certify arbitrary classifiers, a gap addressed by randomized smoothing.

\vspace{0.05in}

\noindent\textbf{Randomized Smoothing}. Randomized smoothing was first explored by Lecuyer et al.~\cite{lecuyer2019certified}, who derived a loose theoretical robustness bound via Differential Privacy~\cite{dwork2006differential, dwork2008differential}. Cohen et al.~\cite{cohen2019certified} later provided the first tight guarantee, showing that adding Gaussian noise transforms any classifier into a smoothed classifier with certified $\ell_2$ robustness. Subsequent work extended smoothing to other $\ell_p$ norms: Teng et al.~\cite{teng2020ell} analyzed $\ell_1$ robustness with Laplace noise, and Lee et al.~\cite{DBLP:conf/nips/LeeYCJ19} studied $\ell_0$ robustness with uniform noise. Unified frameworks have also been proposed: Zhang et al.~\cite{zhang2020black} presented an optimization-based approach certifying $\ell_1$, $\ell_2$, and $\ell_\infty$ robustness; Yang et al.~\cite{yang2020randomized} introduced methods based on level sets and differentials to bound certified radii for various distributions and norms; Hong et al.~\cite{hong2022unicr} proposed a framework to approximately certify any $\ell_p$-norm adversary with any noise. However, existing randomized smoothing methods use fixed noise distributions (e.g., Gaussian, Laplace) applied uniformly to all input dimensions, overlooking data heterogeneity and limiting optimal protection for different inputs.

\vspace{0.05in}

\noindent\textbf{Data-Dependent Randomized Smoothing}. Data-dependent randomized smoothing improves certified robustness by optimizing noise distributions for individual inputs. Most existing approaches focus on isotropic noise, typically optimizing the variance to enlarge the certified radius. For instance, Alfarra et al.~\cite{alfarra2020data} optimize the Gaussian variance via gradient methods; Sukenik et al.~\cite{sukenik2021intriguing} model variance as an input-dependent function; Wang et al.~\cite{wang2020pretrain} use grid search for variance selection. However, these methods still inject noise with identical variance across all dimensions, due to the absence of anisotropic theory.


\vspace{0.05in}

\noindent\textbf{Asymmetric Randomized Smoothing}. 
Prior work has already shown that randomized smoothing can certify \emph{anisotropic} regions: ANCER \cite{eiras2021ancer} introduces axis-aligned ellipsoidal certificates by scaling per-dimension noise, and RANCER \cite{rumezhak2023rancer} generalizes this to full (rotated) covariance structures. Both frameworks derive guarantees by enforcing Lipschitz constraints on the smoothed classifier and by (optionally) adapting noise parameters per input to enlarge the certified set.

Our approach departs from the Lipschitz-based route and instead relies on an explicit \emph{linear transformation} between isotropic and anisotropic noise distributions. Concretely, let the isotropic guarantee certify $x$ within radius $R$ under noise $\epsilon$. For any invertible covariance matrix $\Sigma$ (cf.\ our notation in Eq.~(4)), the same argument transfers to the anisotropic setting by the change of variables $\delta' = \Sigma \delta$: $\|\delta\|_p \leq R \;\Longleftrightarrow\; \|\Sigma^{-1} \delta'\|_p \leq R$, so the certified region becomes an ellipsoid (or generalized ellipsoid) parameterized by $\Sigma$. This algebraic reduction preserves the original Neyman--Pearson optimality of isotropic smoothing and avoids the typically looser (or at least not tighter) bounds introduced by gradient/Lipschitz upper bounds used in \cite{eiras2021ancer,rumezhak2023rancer}.

\cite{eiras2021ancer,rumezhak2023rancer} allow input-dependent noise but must cache those parameters to apply the same distribution for all evaluations during certification; otherwise the proof assumptions change. Our ``certification-wise'' design fixes the noise after observing the clean input and reuses it throughout the procedure, eliminating the memory-based workaround while retaining input adaptivity at $x$. By replacing Lipschitz bounds with a linear change of variables, we (i) simplify the theoretical pipeline, (ii) obtain tighter certificates (no gradient over-approximation), and (iii) remove the need for parameter memorization. Empirically (Table~\ref{tab: comparison CIFAR10}), at radius 1.25, our method certifies 75\% accuracy, far exceeding ANCER (31\%) and RANCER (1\%). Across all large radii, we achieve up to 182.6\% improvement over these baselines.

\section{Conclusion}
\label{sec: Conclusion}


In this paper, we introduce UCAN (Universally Certifies adversarial robustness with Anisotropic Noise), a novel and flexible randomized smoothing framework that transforms any isotropic noise-based scheme into schemes utilizing anisotropic noise, providing strict and theoretically grounded robustness guarantees. We also propose a unified and customizable noise customization framework with three methods for fine-tuning anisotropic noise in classifier smoothing. Extensive and comprehensive evaluation of the proposed method on MNIST, CIFAR10, and ImageNet confirms that UCAN substantially enhances the certified robustness of existing randomized smoothing methods. By enabling flexible and dimension-aware noise design, UCAN opens new possibilities for certified defense in more complex and heterogeneous data domains.


\clearpage
\bibliographystyle{splncs04}
{\footnotesize
\bibliography{ARS}
}

\appendices
{

\section{Proof of Theorem \ref{thm:our thm}}
\label{apd:proof}
We restate Theorem \ref{thm:our thm}:

\thmone*

\begin{proof}
Let $X = x + \epsilon$, where $\epsilon \in \mathbb{R}^d$ follows an isotropic noise distribution. Let $Y = x + \Sigma \epsilon + \mu$, where $\Sigma \in \mathbb{R}^{d \times d}$ is an invertible covariance matrix, and $\mu \in \mathbb{R}^d$ is a mean offset vector.

Given the input $x$, covariance matrix $\Sigma$, and mean offset $\mu$, define:
\begin{equation}
\label{eq:mu_prime}
\mu' = \mu + x - \Sigma x
\end{equation}
Then, the anisotropic input can be written as:
\begin{equation}
Y = \Sigma (x + \epsilon) + \mu' = \Sigma X + \mu'
\end{equation}

Define a transformation $h: \mathbb{R}^d \to \mathbb{R}^d$ as:
\begin{equation}
\label{eq:transform_h}
h(z) = \Sigma z + \mu'
\end{equation}
This transformation maps the isotropic input $z$ to the anisotropic space.

Given any deterministic or randomized function $f : \mathbb{R}^d \to \mathcal{C}$, define the anisotropic smoothed classifier as:
\begin{equation}
\label{eq:anisotropic_smoothed_classifier}
g'(x) = \arg \max_{c \in \mathcal{C}} \mathbb{P}\left( f\left( x + \Sigma \epsilon + \mu \right) = c \right)
\end{equation}

Suppose that for the anisotropic random variable $Y$, there exist $c_A' \in \mathcal{C}$ and $\underline{p_A}', \overline{p_B}' \in [0,1]$ such that:
\begin{equation}
\label{eq:anisotropic_condition}
\mathbb{P}\left( f(Y) = c_A' \right) \geq \underline{p_A}' \geq \overline{p_B}' \geq \max_{c \neq c_A'} \mathbb{P}\left( f(Y) = c \right)
\end{equation}

By the transformation $h$, the condition in Equation~\eqref{eq:anisotropic_condition} is equivalent to:
\begin{align}
\mathbb{P}\left( f(Y) = c_A' \right) &= \mathbb{P}\left( f\left( \Sigma X + \mu' \right) = c_A' \right) \\
&= \mathbb{P}\left( f\left( h(X) \right) = c_A' \right) \label{eq:isotropic_class_prob} \\
&\geq \underline{p_A}' \geq \overline{p_B}' \geq \max_{c \neq c_A'} \mathbb{P}\left( f\left( h(X) \right) = c \right) \label{eq:isotropic_class_prob2}
\end{align}

Consider a new classifier $f': \mathbb{R}^d \to \mathcal{C}$ defined as:
\begin{equation}
f'(z) = f\left( h(z) \right)
\end{equation}
This classifier maps the isotropic input $z$ to the class space $\mathcal{C}$ through the transformation $h$.

From Equations~\eqref{eq:isotropic_class_prob} and \eqref{eq:isotropic_class_prob2}, we have:
\begin{equation}
\label{eq:isotropic_condition}
\mathbb{P}\left( f'(X) = c_A' \right) \geq \underline{p_A}' \geq \overline{p_B}' \geq \max_{c \neq c_A'} \mathbb{P}\left( f'(X) = c \right)
\end{equation}
This is the prerequisite condition in the standard isotropic randomized smoothing theory (e.g., Theorem~\ref{thm:cohen's thm}).

Therefore, we can obtain the robustness guarantee with isotropic certified radius $R$ such that:
\begin{align}
\arg \max_{c \in \mathcal{C}} \mathbb{P}\left( f'(X + \delta) = c \right) = c_A' \label{eq:guarantee1} \\
\text{subject to } \left\| \delta \right\|_p \leq R\left( \underline{p_A}', \overline{p_B}' \right) \label{eq:guarantee2}
\end{align}

By the definition of $h$ in Equation~\eqref{eq:transform_h} and the fact that $Y = h(X)$, Equation~\eqref{eq:guarantee1} is equivalent to:
\begin{align}
\arg \max_{c \in \mathcal{C}} &\mathbb{P}\left( f'\left( X + \delta \right) = c \right) \\
&= \arg \max_{c \in \mathcal{C}} \mathbb{P}\left( f\left( h\left( X + \delta \right) \right) = c \right) \\
&= \arg \max_{c \in \mathcal{C}} \mathbb{P}\left( f\left( \Sigma \left( X + \delta \right) + \mu' \right) = c \right) \\
&= \arg \max_{c \in \mathcal{C}} \mathbb{P}\left( f\left( \Sigma X + \Sigma \delta + \mu' \right) = c \right) \\
&= \arg \max_{c \in \mathcal{C}} \mathbb{P}\left( f\left( Y + \Sigma \delta \right) = c \right) = c_A' \label{eq:guarantee3}
\end{align}

Define $\delta' = \Sigma \delta$, with $\delta = \Sigma^{-1} \delta'$ since $\Sigma$ is invertible.

Substituting $\delta = \Sigma^{-1} \delta'$ into the condition $\left\| \delta \right\|_p \leq R\left( \underline{p_A}', \overline{p_B}' \right)$, we obtain:
\begin{equation}
\left\| \Sigma^{-1} \delta' \right\|_p \leq R\left( \underline{p_A}', \overline{p_B}' \right)
\end{equation}

Therefore, the guarantee in Equations~\eqref{eq:guarantee1} and \eqref{eq:guarantee2} is equivalent to:
\begin{align}
\arg \max_{c \in \mathcal{C}} \mathbb{P}\left( f\left( Y + \delta' \right) = c \right) = c_A' \\
\text{subject to } \left\| \Sigma^{-1} \delta' \right\|_p \leq R\left( \underline{p_A}', \overline{p_B}' \right)
\end{align}

By Equation~\eqref{eq:anisotropic_smoothed_classifier}, we have:
\begin{equation}
g'\left( x + \delta' \right) = \arg \max_{c \in \mathcal{C}} \mathbb{P}\left( f\left( Y + \delta' \right) = c \right) = c_A'
\end{equation}
for all $\delta' \in \mathbb{R}^d$ satisfying $\left\| \Sigma^{-1} \delta' \right\|_p \leq R\left( \underline{p_A}', \overline{p_B}' \right)$.

This completes the proof.
\end{proof}

\section*{Binary case of Theorem \ref{thm:our thm}}
\label{apd:binary}
\begin{restatable}[\textbf{Universal Transformation for Anisotropic Noise}]{theorem}{thmbinary}
\label{thm:binary}
Let $f \ : \ \mathbb{R}^d \to \mathcal{C}$ be any deterministic or random function. Suppose that for the isotropic input $X$ in Theorem \ref{thm:cohen's thm}, the certified radius function for the binary case is $R(\cdot)$. Then, for the corresponding anisotropic input $Y$ such that $Y=x+\epsilon^\top\sigma+\mu$, if there exist $c'_A\in \mathcal{C}$ and $\underline{p_A}' \in (1/2,1]$ such that:

\begin{equation}\small
    \mathbb{P}(f(Y)=c'_A) \geq \underline{p_A}' \geq \frac{1}{2}
\end{equation}

Then $g'(x+\delta')= \arg \max_{c\in \mathcal{C}} \mathbb{P}(f(Y)=c) =c'_A $ for all $||\delta' \oslash \sigma||_p \leq R(\underline{p_A}')$
where $g'$ denotes the anisotropic smoothed classifier, $\delta'$ denotes the perturbation injected to $g'$.

\begin{proof} The proof for the binary case is similar to the proofs for the multiclass-case (see Appendix \ref{apd:proof}).  
\end{proof}

\end{restatable}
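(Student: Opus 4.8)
The plan is to mirror the change-of-variables argument from the proof of Theorem~\ref{thm:our thm} (Appendix~\ref{apd:proof}), specialized to the diagonal scaling $\sigma$ and to the single-bound structure of the binary setting. The idea is to absorb the anisotropic reparametrization into the base classifier so that the binary isotropic certificate $R(\cdot)$ of Theorem~\ref{thm:cohen's thm} applies verbatim, and then to pull the resulting guarantee back into the anisotropic input space.

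First I would introduce the element-wise affine map $h(z) = z^\top \sigma + \mu'$ with $\mu' = \mu + x - x^\top \sigma$, where all products are taken element-wise as in the proof of Theorem~\ref{thm:soundness}. A direct expansion gives $h(x+\epsilon) = x + \epsilon^\top \sigma + \mu = Y$, so $h$ sends the isotropic variable $X = x+\epsilon$ to the anisotropic variable $Y$. I would then define the composed classifier $f' = f \circ h$ on the isotropic input space. Since $f'(X) = f(h(X)) = f(Y)$ as random variables, the hypothesis $\mathbb{P}(f(Y)=c'_A) \geq \underline{p_A}' \geq 1/2$ transfers immediately to $\mathbb{P}(f'(X)=c'_A) \geq \underline{p_A}' \geq 1/2$, which is exactly the prerequisite of the binary isotropic smoothing theorem for $f'$ under the noise $\epsilon$.

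Next I would invoke the binary case of Theorem~\ref{thm:cohen's thm} for $f'$, yielding $\arg\max_{c} \mathbb{P}(f'(X+\delta)=c) = c'_A$ for every $\delta$ with $\|\delta\|_p \leq R(\underline{p_A}')$. The final step is the pullback: because $h(X+\delta) = h(X) + \delta^\top \sigma = Y + \delta^\top \sigma$, setting $\delta' = \delta^\top \sigma$ (equivalently $\delta = \delta' \oslash \sigma$, which is well defined since each $\sigma_i > 0$) gives $\mathbb{P}(f'(X+\delta)=c) = \mathbb{P}(f(Y+\delta')=c)$ for every $c$, while the constraint $\|\delta\|_p \leq R(\underline{p_A}')$ becomes $\|\delta' \oslash \sigma\|_p \leq R(\underline{p_A}')$. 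Assembling these identities gives $g'(x+\delta') = \arg\max_c \mathbb{P}(f(Y+\delta')=c) = c'_A$ on precisely the claimed region.

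The only step demanding care, rather than conceptual difficulty, is the element-wise bookkeeping: verifying that the mean-offset correction $\mu'$ makes $h$ map $X$ exactly to $Y$, and that the substitution $\delta \mapsto \delta^\top \sigma$ carries the isotropic $\ell_p$-ball onto the anisotropic region $\{\delta' : \|\delta' \oslash \sigma\|_p \leq R\}$ without altering the underlying noise law. Because the binary case tracks only the single bound $\underline{p_A}' \geq 1/2$ instead of the pair $(\underline{p_A}', \overline{p_B}')$, the prerequisite verification is strictly lighter than in the multiclass proof, and I anticipate no genuine obstacle beyond this accounting.
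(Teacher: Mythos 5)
Your proposal is correct and follows essentially the same route as the paper: the paper proves the binary case by pointing to the multiclass proof in Appendix~\ref{apd:proof}, which is exactly the change-of-variables argument you give (absorbing $h(z)=z^\top\sigma+\mu'$ with $\mu'=\mu+x-x^\top\sigma$ into a composed classifier $f'=f\circ h$, invoking the isotropic certificate, and pulling back via $\delta'=\delta^\top\sigma$), specialized to diagonal scaling and the single bound $\underline{p_A}'\geq 1/2$. No gaps; your element-wise bookkeeping matches the paper's treatment.
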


\section{Additional Metric for Certified Region (Binary-case)}
\label{apd: metric}

\begin{table*}[!t]

    \centering
    \caption{ALM (binary-case) for randomized smoothing with independent anisotropic noise. $d$ is the dimension size. $\Phi^{-1}$ is the inverse CDF of Gaussian distribution. $\lambda$ is the scalar parameter of the isotropic noise. $\sigma$ is the anisotropic scale multiplier.}
   \vspace{-0.1in}
    \resizebox{0.8\linewidth}{!}{
    \begin{tabular}{c c c c c}
    \hline
    \hline
         Distribution & PDF & Adv.  &Anisotropic Guarantee &   ALM \\
    \hline
         Gaussian \cite{cohen2019certified} & $\propto e^{-\|\frac{z}{\lambda}\|_2^2}$ & $\ell_2$ & $||\delta \oslash \sigma||_2 \leq\lambda (\Phi^{-1}(\underline{p_A}'))$ &  $ \sqrt[d]{\prod_{i=1}^d\sigma_i} \lambda (\Phi^{-1}(\underline{p_A}'))$\\
         Gaussian \cite{yang2020randomized} & $\propto e^{-\|{\frac{z}{\lambda}}\|_2^2}$ & $\ell_1$ &  $||\delta \oslash \sigma||_1 \leq \lambda (\Phi^{-1}(\underline{p_A}'))$ 
         & $\sqrt[d]{\prod_{i=1}^d\sigma_i} \lambda (\Phi^{-1}(\underline{p_A}'))$\\
         &  & $\ell_\infty$ & $||\delta \oslash \sigma||_\infty \leq \lambda (\Phi^{-1}(\underline{p_A}'))/\sqrt{d}$ 
         & $\sqrt[d]{\prod_{i=1}^d\sigma_i} \lambda (\Phi^{-1}(\underline{p_A}'))/\sqrt{d}$ \\
         Laplace \cite{teng2020ell} & $\propto e^{-\|\frac{z}{\lambda}\|_1}$ & $\ell_1$ & $||\delta \oslash \sigma||_1 \leq- \lambda \log(2(1-\underline{p_A}'))$ 
         & $\sqrt[d]{\prod_{i=1}^d\sigma_i} \lambda \log(2(1-\underline{p_A}'))$\\ 
         Exp.  $\ell_\infty$ \cite{yang2020randomized} & $\propto e^{-\|\frac{z}{\lambda}\|_\infty}$ & $\ell_1$ & $||\delta \oslash \sigma||_1 \leq 2d \lambda(\underline{p_A}'-\frac{1}{2})$ 
         & $2\sqrt[d]{\prod_{i=1}^d\sigma_i}d\lambda(\underline{p_A}'-\frac{1}{2})$\\
          &  & $\ell_\infty$ & $||\delta \oslash \sigma||_\infty \leq \lambda \log(\frac{1}{2(1-\underline{p_A}')})$ 
          & $ \sqrt[d]{\prod_{i=1}^d\sigma_i}\lambda \log(\frac{1}{2(1-\underline{p_A}')})$\\
         Uniform $\ell_\infty$ \cite{lee2018simple}& $\propto \mathbb{I}(\|z\|_\infty\leq\lambda)$ & $\ell_1$ & $ ||\delta \oslash \sigma||_1 \leq 2 \lambda (\underline{p_A}'-\frac{1}{2})$ 
         & $2 \sqrt[d]{\prod_{i=1}^d\sigma_i}\lambda  (\underline{p_A}'-\frac{1}{2})$\\
         & & $\ell_\infty$ & $||\delta \oslash \sigma||_\infty \leq 2\lambda(1-\sqrt[d]{\frac{3}{2}-\underline{p_A}'})$ 
         & $2 \sqrt[d]{\prod_{i=1}^d\sigma_i}\lambda(1-\sqrt[d]{\frac{3}{2}-\underline{p_A}'})$\\
         Power Law $\ell_\infty$ \cite{yang2020randomized} & $\propto \frac{1}{(1+\|\frac{z}{\lambda}\|_\infty)^a}$ & $\ell_1$ & $||\delta \oslash \sigma||_1 \leq \frac{2d\lambda}{a-d}(\underline{p_A}-\frac{1}{2})$ 
         & $2\sqrt[d]{\prod_{i=1}^d\sigma_i}\frac{d\lambda}{a-d}(\underline{p_A}-\frac{1}{2})$\\
    \hline
    \hline
    \end{tabular}}
     \label{tab:ALM formulas}
     \vspace{-0.05in}
\end{table*}

How to develop a general metric for evaluating the robustness region for randomized smoothing with anisotropic noise is an important but challenging problem. We observe that the guarantee in Theorem \ref{thm:our thm} forms a certified region, within which the perturbation is certifiably safe to the smoothed classifier. The $\ell_p$-norm bounding on the scaled perturbation $\delta'\oslash\sigma$ results in the anisotropy of the certified region around the input. We illustrate the asymmetric certified region for different $\ell_p$-norm guarantees in Figure \ref{fig:anisotropic_vs_isotropic}. It shows that if the $\delta$ space is a 2-dimension space, then the guarantee of asymmetric RS draws a rhombus, ellipse, and rectangle in $\ell_1$, $\ell_2$, and $\ell_\infty$ norms, respectively. Within the asymmetric region, we can find an isotropic region that also satisfies the robustness guarantee (a subset of the anisotropic region), which represents an explicit certified radius as depicted in Corollary \ref{corollary}.

However, evaluating the performance of randomized smoothing with anisotropic noise via Eq. (\ref{eq: worse-case guarantee}) in Corollary \ref{corollary}, although formally guaranteed with robustness, fails to capture the full certified regions. 

Specifically, Eq. (\ref{eq: worse-case guarantee}) evaluates the performance only based on the blue region in Figure \ref{fig:anisotropic_vs_isotropic}, but the Eq. (\ref{eq:guarantee_general}) in Theorem \ref{thm:our thm} actually guarantees that all the $\delta$ (perturbations) within the green region do not change the perturbation. Therefore, to fairly and accurately evaluate the performance of certification via anisotropic noise, we need to develop an auxiliary metric that can cover the entire certified region in highly-dimensional $\delta$ space as a complement besides the certified radius. 

From another perspective, evaluating the performance of randomized smoothing can be considered as evaluating the size of the robust perturbation set $S(n,p)$.

Consider the Euclidean structure, $S(d, p)$ is a finite set in $d$-dimensional Euclidean space. Therefore, we leverage the Lebesgue measure \cite{bartle2014elements} to compute the size of $S(d,p)$ (see Theorem \ref{thm: Lebesgue measure}).

We observe that for a fixed $d$ and $p$, the $\frac{(2\Gamma(1+\frac{1}{p}))^d}{\Gamma(1+\frac{d}{p})}$ factor in the Lebesgue measure is a constant. Then, when comparing the Lebesgue measure in the same norm $\ell_p$ and the same space $\mathbb{R}^d$, the constant term can be ignored. Also, the $R^d$ factor can lead to infinite numeral computation, thus we also scale the Lebesgue measure by calculating the $d$-th root. As a result, we define the Alternative Lebesgue Measure (ALM) of the robust perturbation set with the same $d$ and $p$ as:

{\footnotesize
\begin{equation}
    V_S'=\sqrt[d]{\prod_{i=1}^d\sigma_i}R
\end{equation}}

Table \ref{tab:ALM formulas} presents the ALM formulas of common RS methods.

\noindent\textbf{Alternative Lebesgue Measure\footnote{ALM is like the normalized radius of the certified region in all $d$ dimensions.} vs Radius}. When the multipliers of the scale parameter for anisotropic noise $\sigma_1=\sigma_2=...=1$, the noise turns into the isotropic noise and the alternative Lebesgue measure turns into the certified radius $R$. Therefore, the alternative Lebesgue measure can be treated as a generalized metric compared to the certified radius. 
This generalization based on the certified radius also enables us to fairly compare the randomized smoothing based on anisotropic noise with isotropic noise.

Note that the new metric ALM is not developed to bound the perturbation, but to accurately measure the certified guarantees of randomized smoothing with anisotropic noise.

\begin{figure*}[!t]
    \centering
    \resizebox{0.8\textwidth}{!}{ 
    \begin{minipage}{\textwidth}
    \begin{subfigure}[b]{0.33\linewidth}
        \centering
        \includegraphics[width=\linewidth]{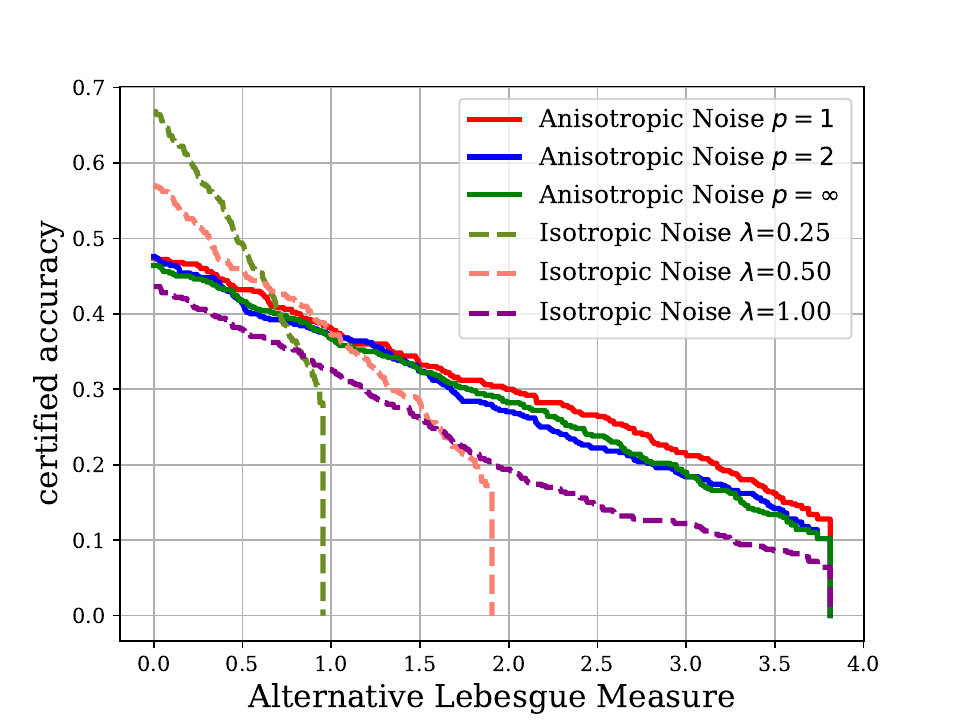}
        \caption{Pattern-wise (ImageNet)}
        \label{fig:exp pre-assigned ImageNet}
    \end{subfigure}
    \hspace{-0.1in}
    \begin{subfigure}[b]{0.33\linewidth}
        \centering
        \includegraphics[width=\linewidth]{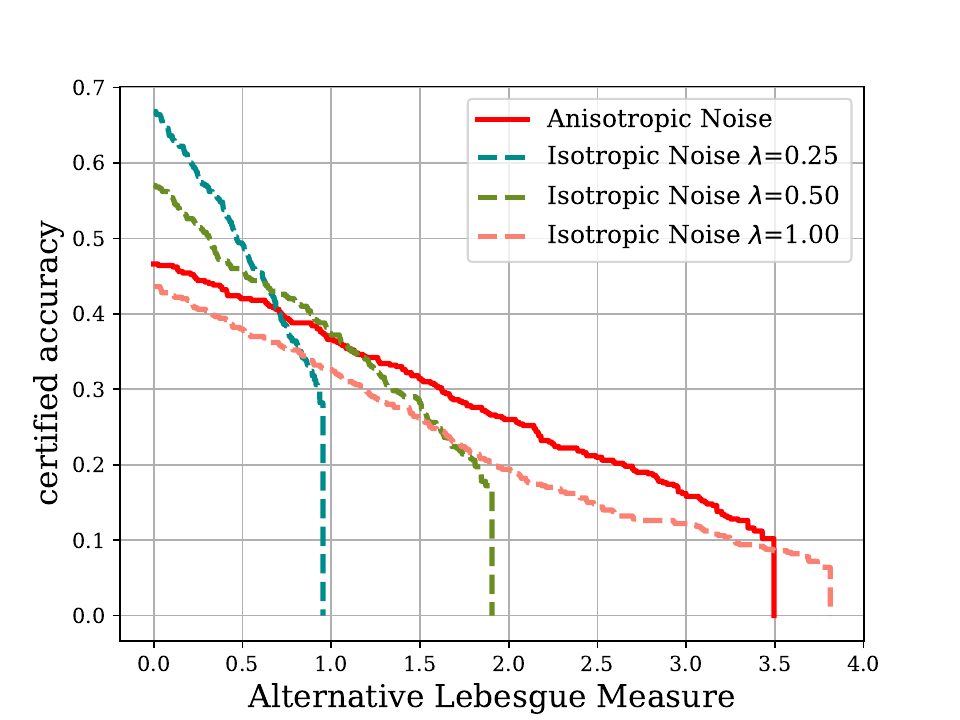}
        \caption{Dataset-wise (ImageNet)}
        \label{fig:exp universal ImageNet}
    \end{subfigure}
    \hspace{-0.1in}
    \begin{subfigure}[b]{0.33\linewidth}
        \centering
        \includegraphics[width=\linewidth]{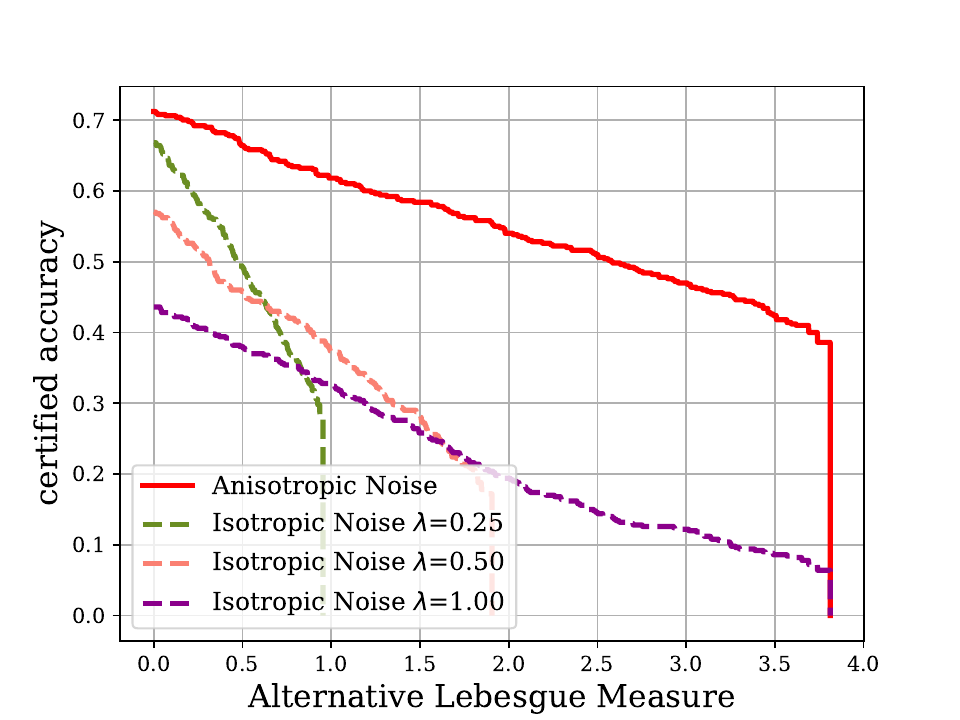}
        \caption{Certification-wise (ImageNet)}
        \label{fig:exp personalized ImageNet}
    \end{subfigure}
    \end{minipage}
    } 
    \caption{Comparison of randomized smoothing performance on ImageNet with anisotropic noise and that with isotropic noise (Gaussian distribution for certified defense against $\ell_2$ perturbations, comparing with \cite{cohen2019certified}).}\vspace{-0.1in}
    \label{fig:exp three methods imagenet}
\end{figure*}

\section{Proof of Theorem \ref{thm: Lebesgue measure}}
\label{apd: proof thm 5.4}

\begin{proof}    
\begin{definition}[\textbf{d-dimensional Generalized Super-ellipsoid}]
\label{def:super-ellipsoid}
The d-dimensional generalized super-ellipsoid ball is defined as
{\footnotesize
\begin{equation}
    E(d,p)=\{(\delta_1,\delta_2,...,\delta_d):\sum_{i=1}^d|\frac{\delta_i}{c_i}|^{p_i}\leq 1, p_i>0\}
\end{equation}}
\end{definition}

\begin{definition}[\textbf{Euler Gamma Function}]
\label{def: euler gamma}

The Euler gamma function is defined by
{\footnotesize
\begin{equation}
    \Gamma(\beta)=\int_0^\infty \alpha^{\beta-1}e^{-\alpha}d\alpha
\end{equation}}
There are some properties of $\Gamma$: 1) For all $\beta>0$, $\beta\Gamma(\beta)=\Gamma(\beta+1)$, 2) For all positive integers $n$, $\Gamma(n)=(n-1)!$, and 3) $\Gamma(1/2)=\sqrt{\pi}$. 
\end{definition}

\begin{lemma}[\textbf{Lebesgue Measure of Generalized Super-ellipsoids \cite{ahmed2018volumes}}]
\label{Lemma:volumn equation}
The Lebesgue measure of the generalized super-ellipsoids defined in Definition \ref{def:super-ellipsoid} is given by
{\footnotesize
\begin{equation}
\label{eq：apd lebesgue}
    V_E(d,p)=2^d \frac{\prod_{i=1}^d c_i \Gamma(1+\frac{1}{p_i})}{\Gamma(1+\sum_{i=1}^{d}\frac{1}{p_i})}; p_i>0, d=1,2,3,...
\end{equation}}
\end{lemma}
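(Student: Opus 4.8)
The plan is to prove the formula from first principles via an affine reduction followed by the classical Gamma-function (probabilistic) argument for $\ell_p$-type volumes, rather than treating it as a black box. First I would normalize the region in Definition~\ref{def:super-ellipsoid}. Setting $u_i = \delta_i / c_i$ is a diagonal linear change of variables with constant Jacobian $\prod_{i=1}^d c_i$, so it suffices to compute the Lebesgue measure of the \emph{unit} region $B = \{u : \sum_{i=1}^d |u_i|^{p_i} \le 1\}$ and then multiply by $\prod_i c_i$. Since $B$ is symmetric under each sign flip $u_i \mapsto -u_i$, I would further restrict to the positive orthant: with $I := \operatorname{vol}\{u \ge 0 : \sum_i u_i^{p_i} \le 1\}$, the full measure is $V_E(d,p) = 2^d \big(\prod_i c_i\big) I$. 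The whole problem thus reduces to evaluating the single orthant constant $I$.

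To compute $I$ I would evaluate the auxiliary integral $J = \int_{\mathbb{R}_+^d} e^{-s(u)}\,du$, where $s(u) = \sum_{i=1}^d u_i^{p_i}$, in two different ways. On one hand, $J$ factors as a product of one-dimensional integrals; the substitution $t = u_i^{p_i}$ together with the defining property $\beta\Gamma(\beta) = \Gamma(\beta+1)$ from Definition~\ref{def: euler gamma} gives $\int_0^\infty e^{-u^{p_i}}\,du = \tfrac{1}{p_i}\Gamma(\tfrac{1}{p_i}) = \Gamma(1+\tfrac1{p_i})$, so that $J = \prod_{i=1}^d \Gamma(1+\tfrac1{p_i})$. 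On the other hand, I would exploit the \emph{anisotropic} dilation $v \mapsto (\tau^{1/p_1}v_1,\dots,\tau^{1/p_d}v_d)$, which maps $\{s \le 1\}$ onto $\{s \le \tau\}$ with Jacobian $\tau^{\sum_i 1/p_i}$; hence the positive-orthant sublevel volume satisfies $V(\tau) := \operatorname{vol}\{u\ge 0 : s(u)\le\tau\} = I\,\tau^{\gamma}$ with $\gamma = \sum_{i=1}^d \tfrac1{p_i}$.

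Combining these, a layer-cake (co-area) rewriting of $J$ in terms of the sublevel density gives
\begin{equation}
J = \int_0^\infty e^{-\tau}\,dV(\tau) = I\gamma\int_0^\infty e^{-\tau}\tau^{\gamma-1}\,d\tau = I\gamma\,\Gamma(\gamma) = I\,\Gamma(1+\gamma).
\end{equation}
Equating the two evaluations of $J$ yields $I = \prod_{i=1}^d \Gamma(1+\tfrac1{p_i})\big/\Gamma\!\big(1+\sum_{i=1}^d \tfrac1{p_i}\big)$, and substituting into $V_E = 2^d(\prod_i c_i)I$ produces exactly the claimed formula. The main obstacle I anticipate is making the co-area step fully rigorous: I must justify that $V(\tau)$ is absolutely continuous (immediate from the power law $I\tau^\gamma$) and that the layer-cake identity legitimately relates $J$ to $V'(\tau) = I\gamma\tau^{\gamma-1}$. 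Because every integrand here is nonnegative, Tonelli's theorem licenses all the interchanges, so once the scaling identity $V(\tau)=I\tau^\gamma$ is established the remaining manipulations are routine; the genuine content is entirely concentrated in that single scaling relation and the two-way evaluation of $J$.
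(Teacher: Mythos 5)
Your proof is correct, but it takes a genuinely different route from the paper: the paper does not prove this lemma at all—it imports the formula verbatim from the cited reference \cite{ahmed2018volumes} as a black box, and its appendix only \emph{uses} it (specializing $p_i \equiv p$ and $c_i = \sigma_i R$) to obtain Theorem~\ref{thm: Lebesgue measure}. What you have supplied is the classical Dirichlet-integral derivation, and every step checks out: the diagonal change of variables contributes the Jacobian $\prod_{i=1}^d c_i$; the sign symmetries give the $2^d$ factor and reduce everything to the orthant constant $I$; the one-dimensional computation $\int_0^\infty e^{-u^{p_i}}\,du = \tfrac{1}{p_i}\Gamma(\tfrac{1}{p_i}) = \Gamma(1+\tfrac{1}{p_i})$ together with Tonelli gives $J = \prod_i \Gamma(1+\tfrac{1}{p_i})$; the anisotropic dilation correctly yields the scaling law $V(\tau) = I\tau^{\gamma}$ with $\gamma = \sum_i \tfrac{1}{p_i}$; and since $V$ is then a pure power (hence absolutely continuous with density $I\gamma\tau^{\gamma-1}$), the layer-cake identity $J = I\,\Gamma(1+\gamma)$ is legitimate, with all interchanges licensed by nonnegativity. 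Equating the two evaluations gives exactly the claimed constant. The trade-off between the two treatments is the usual one: the paper's citation is economical and defers correctness to the literature, whereas your argument makes the appendix self-contained and, usefully, exposes \emph{why} the denominator is $\Gamma\bigl(1+\sum_i \tfrac{1}{p_i}\bigr)$—which is precisely what becomes the $\Gamma(1+\tfrac{d}{p})$ in the paper's Theorem~\ref{thm: Lebesgue measure} when all exponents coincide. The only cosmetic caveat is to state explicitly that the semi-axes satisfy $c_i > 0$ (implicit in the lemma), so that no absolute values are needed in the Jacobian.
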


We leverage Lemma \ref{Lemma:volumn equation} to prove Theorem \ref{thm:our thm}. Let $p_i=p$, the d-dimensional robust perturbation set is equivalent to 
{\footnotesize
\begin{equation}
S(d,p)=\{(\delta_1,\delta_2,...,\delta_d): \sum_{i=1}^d |\frac{\delta_i}{\sigma_iR}|^p\leq 1 \}   
\end{equation}
}
The Lebesgue measure in Lemma \ref{Lemma:volumn equation} will be

{\footnotesize
\begin{align}
    V_E(d,p)=&2^d \frac{\prod_{i=1}^d c_i R \Gamma(1+\frac{1}{p})}{\Gamma(1+\sum_{i=1}^{d}\frac{1}{p})}=\frac{(2R\Gamma(1+\frac{1}{p}))^d\prod_{i=1}^d \sigma_i}{\Gamma(1+\frac{d}{p})}; \nonumber\\
    &p>0, d=1,2,3,...
\end{align}
}
Thus, this completes the proof. 
\end{proof}

\begin{figure}[!t]

\centering
\begin{subfigure}[b]{0.51\linewidth}
    \centering
    \includegraphics[width=\linewidth]{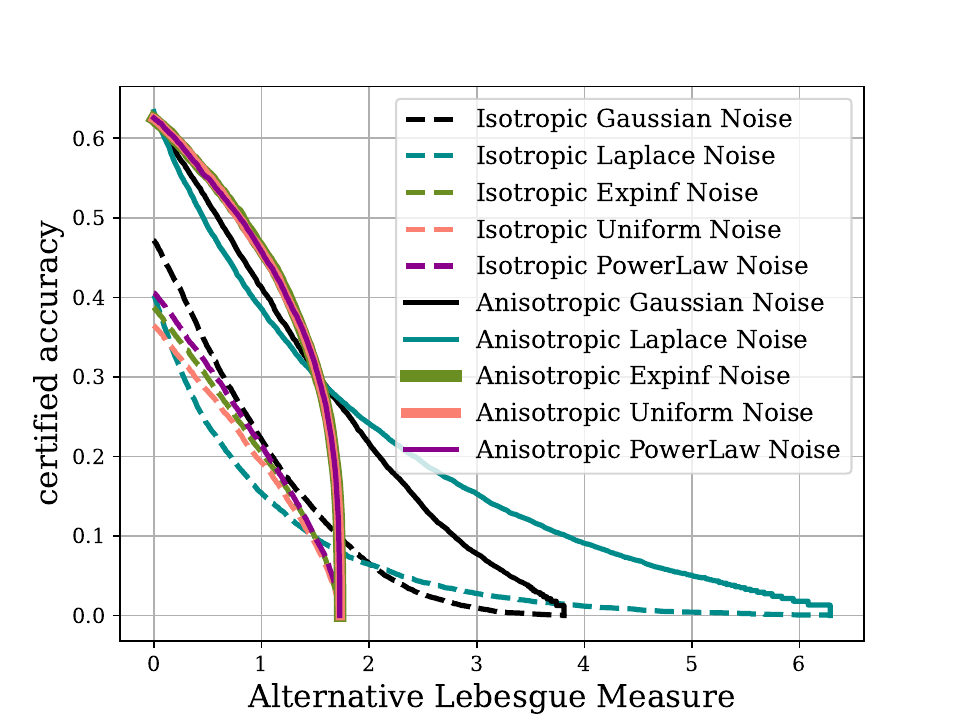}
    \caption{Pattern-wise vs. $\ell_1$ pert.}
    \label{fig:exp universality pre-assgiend l1}
\end{subfigure}
\hspace{-0.2in} 
\begin{subfigure}[b]{0.51\linewidth}
    \centering
    \includegraphics[width=\linewidth]{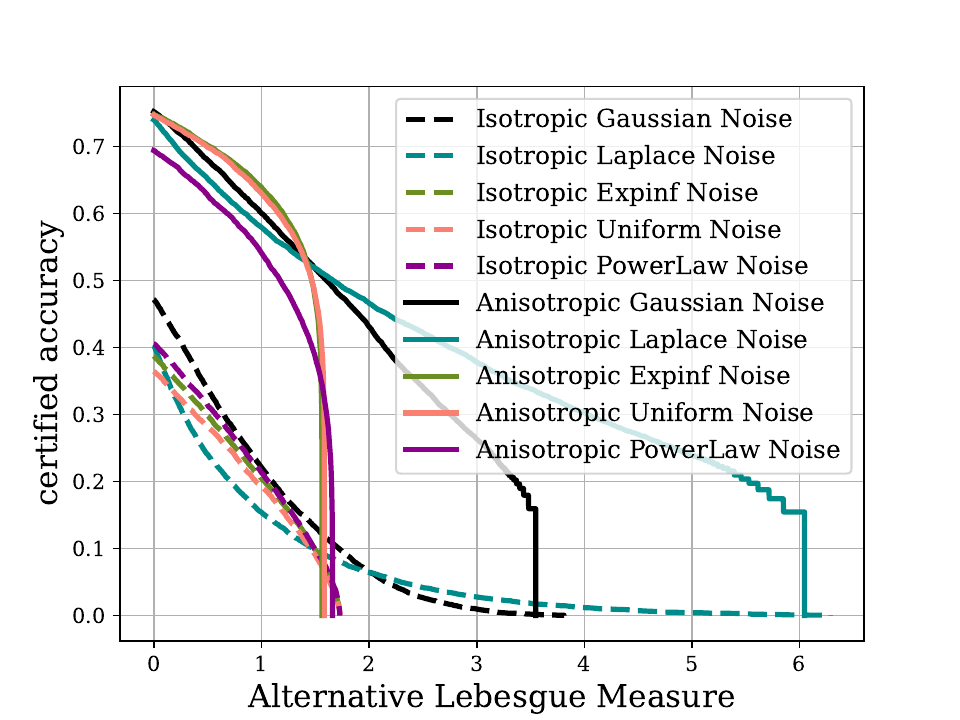}
    \caption{Dataset-wise vs. $\ell_1$ pert.}
    \label{fig:exp universality universal l1}
\end{subfigure}

\begin{subfigure}[b]{0.51\linewidth}
    \centering
    \includegraphics[width=\linewidth]{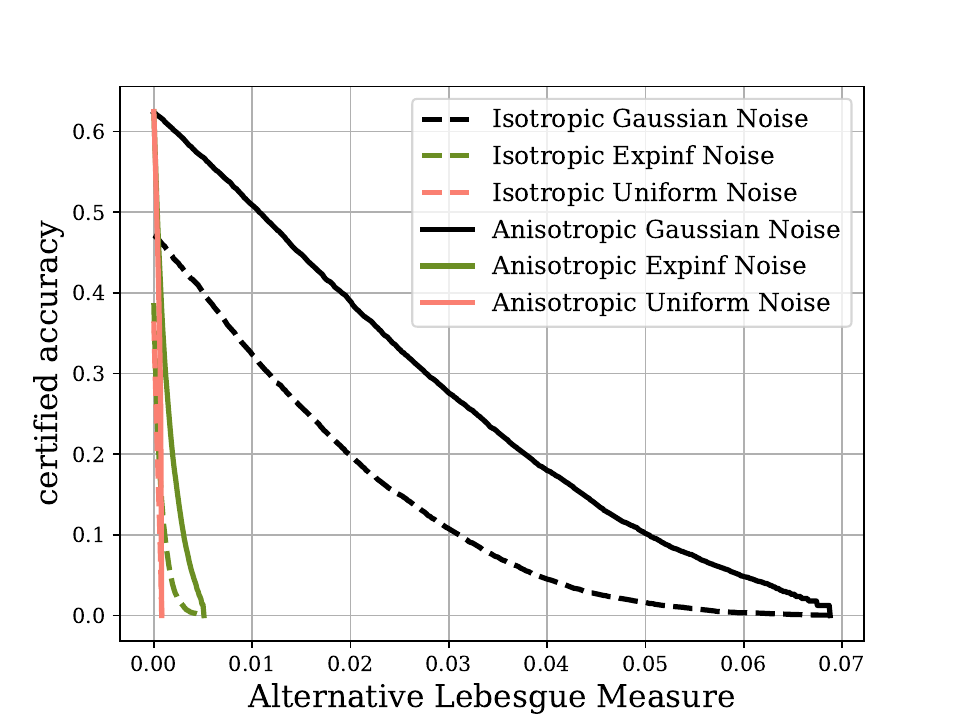}
    \caption{Pattern-wise vs. $\ell_\infty$ pert.}
    \label{fig:exp universality pre-assigned linf}
\end{subfigure}
\hspace{-0.2in} 
\begin{subfigure}[b]{0.51\linewidth}
    \centering
    \includegraphics[width=\linewidth]{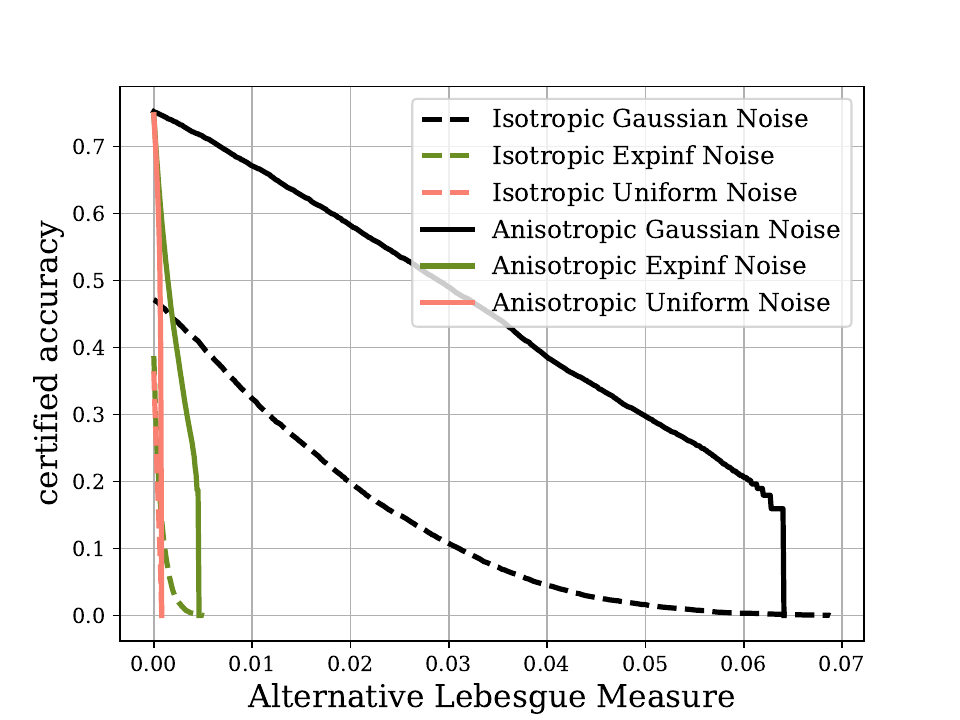}
    \caption{Dataset-wise vs. $\ell_\infty$ pert.}
    \label{fig:exp universality universal linf}
\end{subfigure}

\caption{UCAN with pattern-wise and dataset-wise anisotropic noise vs. RS with isotropic noise -- different noise PDFs against different $\ell_p$ perturbations (universality) on CIFAR10.}
\label{fig:exp universality two methods}
\end{figure}

\section{Additional Experiments}
\label{apd:more exps}

\subsection{Anisotropic noise vs. isotropic noise}

We also present the performance comparison of the anisotropic and isotropic RS on ImageNet in \autoref{fig:exp three methods imagenet}. It shows that similar to the results in the MNIST and CIFAR10, the pattern-wise and dataset-wise anisotropic noise can improve the performance of certified accuracy w.r.t. the ALM moderately, while the certification-wise anisotropic noise can significantly boost the performance with the best optimality.

\subsection{Universality of Pattern-wise and Dataset-wise Methods}

Similar to the certification-wise noise, pattern-wise and dataset-wise noise are also universal to different $\ell_p$ norms and different PDFs. As shown in \autoref{fig:exp universality two methods}, the pattern-wise and dataset-wise anisotropic noise can also significantly boost the performance of isotropic RS on various settings.

}

\end{document}